\newcommand{\R}{\mathbb{R}}     
\newcommand{\SO}{\text{SO}}     
\newcommand{\SE}{\text{SE}}
\DeclareMathOperator{\Ad}{Ad}   
\DeclareMathOperator{\ad}{ad}
\newcommand{\bts}{~\bar\times^*} 
\newcommand{\rmv}{{\rm v}}      
\newcommand{\rmf}{{\rm f}}      
\newcommand{\bI}{\mathbb{I}}     
\newcommand{\bM}{\mathbb{M}}     
\newcommand{\rms}{{\rm s}}      
\newcommand{\ls}{\hspace{0em}}   
\newtheorem{theorem}{Theorem}[section]
\newtheorem{proposition}[theorem]{Proposition}
\newenvironment{proof}{
    \begin{trivlist}
    \item[]{\bf Proof.}
    }
    {
    \hfill{$\square$}
    \end{trivlist}
    }
\begin{document}
%
\title{ 
On Centroidal Dynamics and \\ 
Integrability of Average Angular Velocity 
}

\author{ Alessandro Saccon, 
         Silvio Traversaro, 
         Francesco Nori, 
         Henk Nijmeijer }


%

\maketitle

\begin{abstract}
In the literature on robotics and multibody dynamics, 
the concept of average angular velocity has received 
considerable attention in recent years.
We address the question of whether the average 
angular velocity defines an orientation frame
that depends only on the current robot configuration 
and provide a simple algebraic condition to 
check whether this holds. 

In the language of geometric mechanics, 
this condition corresponds to requiring
the flatness of the mechanical connection 
associated to the robotic system. 
Here, however, we provide both 
a reinterpretation and a proof of this result 
accessible to readers with a background 
in rigid body kinematics and multibody dynamics
but not necessarily acquainted with differential geometry,
still providing precise links to the geometric mechanics literature.  

This should help spreading
the algebraic condition beyond the scope of geometric mechanics,
contributing to a proper 
utilization and understanding of the concept of 
average angular velocity.
\end{abstract}

\section{Introduction}

The total momentum of floating articulated robotic systems, 
such as aerial manipulators and humanoid robots, 
has received considerable attention in the robotic literature.
There is a growing consensus that the dynamics of total momentum 
can be used as a reduced but still exact model of the original system
that can ease, e.g., the development of posture and balance controllers
as well as planning algorithms for humanoid robots
\cite{wensing2016},
\cite{nava2016},
\cite{koolen2016design},
\cite{garofalo2015inertially},
\cite{dai2014whole},
\cite{ott2011posture},
\cite{lee2012momentum},
\cite{Kajita2003}.
The total momentum is defined as the sum of all linear and angular
momenta of the (rigid) bodies composing the articulated system.
The momentum is typically computed with respect to a frame
which orientation is that of the inertial frame 
and origin is the total center of mass \cite{Orin2013}.
Its time evolution depends only on the external forces and torques
acting on the system, such as gravity and contact forces.
The total angular momentum can be split into a linear and
an angular component. The linear component, when divided by
the total mass, captures the average linear velocity of the mechanism,
i.e, the velocity of the center of mass (CoM). 
Although still debating about which value it should 
be regulated to, the angular component 
has been used to define a concept of ``average angular velocity'' 
of the entire mechanism. The concept of average angular velocity is discussed in \cite{Orin2013} and
it corresponds, roughly speaking, 
to the angular velocity of entire mechanism,
for the given pose but assuming the internal joints to be fixed, 
corresponding to the current value of the angular momentum.

The geometric mechanics community has been employing a concept strictly
related to the average velocity, the locked velocity, for at least two decades 
\cite[Section 3.3]{marsden1992lecturemechanics}
\cite[Section 5.3]{bloch1996nonholonomic}, but an explicit link between the two concepts appears to be missing. We aim
at providing this link in this paper, hoping that this will help the communication of key results among different research communities
with a different theoretical background and research focus.

Paper's contributions:
{\bf (1) Equation of motions for a free-floating robot
written employing a robot-specific notation consistent with differential geometry notation}: this paper 
presents the dynamics of a simply supported 
articulated rigid-body system subject to 
external forcing by employing a notation which is inspired
by the spatial vector algebra notation \cite{featherstone2008, featherstone2008handbook} 
while allowing for a one-to-one mapping
with the concepts used in geometric mechanics and differential
geometry related to the theory of differentiable manifolds
and Lie groups \cite{Bloch2003,marsden2013introduction} (e.g., 
$X$ corresponds to $\Ad$,
$\times$ to $\ad$, and $\bts$ to $- \ad^*$,
see next section for details).
While the employment of Lie group
formalism is robotics is certainly not new
(see, e.g., the excellent publications
\cite{murray1994mathematical}
and
\cite{Park1995LieGroupFormulation}),
we felt that an explicit parallelism between 
spatial vector algebra 
and
Lie group notations was still missing.
This holds true, in particular, 
for the free-floating dynamics 
case treated here,
required for assessing the integrability 
of the average angular velocity;
{\bf (2) difference between the total momentum Noether's theorem and 
total momentum as commonly encounter in robotics}: we highlight how 
the total momentum considered in the robotic literature 
\cite{Orin2013} actually differs from the total momentum (the 
momentum map) that derives from the application of geometric 
mechanics version of Noether's theorem. As a consequence, 
the average velocity in the robotic literature and 
locked velocity in the geometric mechanics literature represent 
the same velocity, although expressed
with respect two different reference frames.
This apparently unessential detail plays however a key role 
(see discussion in Section~\ref{sec:lockedvel}) in understanding the 
main result of this paper. In highlighting the difference between the 
two velocities, the {\em angular momentum of the center of mass}, an 
extra component of the angular part of the total momentum 
once expressed with respect to the inertial frame, 
receives particular attention;
{\bf (3) Integrability condition for the average angular velocity}: We use the results and insights of the previous two points
to show that the fundamental question ``when does the integration of the average angular velocity define an orientation frame that depends only on the current internal joint position, independently of their time evolution?'' is equivalent to asking if a series of vector valued functions can be interpreted as the (right trivialized) partial derivatives of a nonlinear function of the internal joints.
While pointing out that this is {\em equivalent} to requiring, in terms of geometric mechanics, that the associated mechanical connection is flat (see, e.g., the discussion
on holonomy in \cite[Section 3.14]{Bloch2003}
and reference there in), we provide a reinterpretation of the result, and the associated algebraic condition ensuring the flatness of the connection, that can be easily followed by a reader 
acquainted with multibody dynamics, kinematics of rigid transformations, undergraduate calculus, and the understanding of Schwartz's theorem (symmetry of second derivatives), but with no or limited experience with differential geometry. 
Our hope is to contribute to the diffusion and utilization of this algebraic condition beyond the scope of geometric mechanics.

Rigid body notation is reviewed in Section~\ref{sec:notation}. Section~\ref{sec:momentum} reviews the dynamics of floating mechanical systems and the evolution of the total momentum. 
In Section~\ref{sec:centroidal}, we review 
the concepts of average and locked velocities,
centroidal frame, and provide the algebraic condition to show
when it depends only of the current pose and shape of the mechanism.
Conclusions and a discussion are provided in Section~\ref{sec:conclusion}.

\section{notation} \label{sec:notation}

This section introduces basic notation used for dynamics computations. We refer to \cite{eindhovenNotation}
for details.

Given a vector $w = (x,y,z)^T \in \R^3$, $w^\wedge$ (read $w$ {\em hat}) is the $3\times3$ \emph{skew-symmetric} matrix associated with the cross product $\times$ in $\R^3$, so that $w^\wedge x = w \times x$.
Given the \emph{skew-symmetric matrix} $W = w^\wedge$, 
$W^\vee \in \mathbb{R}^3$ (read $W$ {\em vee}) denotes
the inverse transformation. The set of rotational
matrices is denoted $\SO(3)$, the set of rigid transformations
$\SE(3)$. An element of $\SE(3)$ has the structure $[R, ~ o~;~ 0_{1\times 3}, 1] \in \R^{4\times 4}$, with $R \in \SO(3)$,
$o \in \R^3$, where ; denotes row concatenation.

\subsection{Frames notation} 
A {\em frame} is defined by a point, called {\em origin}, and an {\em orientation frame} \cite{de2013geometric}. We use 
capital letters to indicate frames. Given a frame $F$, we denote with $o_F$ its origin and with $[F]$ its orientation frame, writing $F = (o_F, [F])$.\\[.3ex]
 
\begin{tabular}{@{}p{0.08\textwidth}p{0.33\textwidth}@{}}
$A$, $B$, $\dots$   & Reference frames         \\
$p$                 & An arbitrary point        \\
$B[A]$              & Frame with origin $o_B$ 
                      and orientation $[A]$   
\end{tabular}

\subsection{Coordinates vectors and transformation matrices notation} 
\noindent
\begin{tabular}{@{}p{0.23\textwidth}p{0.22\textwidth}@{}}
$\ls^Ap \in \R^3$             & {\small Coordinates of $p$ 
                                w.r.t. $A$}
\\
$\ls^Ao_B \in \R^3$           & {\small 
 								Coordinates of $o_B$ 
                                w.r.t. to $A$
                                }
\\
$\ls^AR_B \in \R^{3\times3}$  & {\small 
								Rotation matrix 
                                from $[B]$ to $[A]$ 
                                }
\\
$\ls^A H_B = 
\begin{bsmallmatrix} 
\ls^A R_B & \ls^A o_B \\
0_{1 \times 3} & 1 
\end{bsmallmatrix}$          & {\small 
								Rigid transf.
                                from $B$ to $A$
                                }
\\[1ex]
$\ls^AX_B \hspace{.2ex}
= 
\begin{bsmallmatrix}
 \ls^A R_B     &           \ls^A o^{\wedge}_B  \ls^A R_B \\
  0_{3\times3} &  \phantom{\ls^A o^{\wedge}_B} \ls^A R_B 
\end{bsmallmatrix}$          
                              & {\small 
                              	Velocity transf.
                                from $B$ to $A$ 
                                }
\\[1ex]
$\ls^C\rmv_{A,B} 
=
\begin{bsmallmatrix}
  \ls^C v_{A,B} \\
  \ls^C \omega_{A,B} 
\end{bsmallmatrix}
\in \R^{6}$                  & {\small 
							   Velocity of $B$ w.r.t. to $A$
                               expressed in $C$ 
                               }
\end{tabular}
\begin{tabular}{@{}p{0.23\textwidth}p{0.22\textwidth}@{}}
$
\ls^B\rmv_{A,B} \times = 
\begin{bsmallmatrix}
\ls^B\omega_{A,B}^\wedge & 
\ls^Bv_{A,B}^\wedge \\
0_{3 \times 3} & 
\ls^B\omega_{A,B}^\wedge
\end{bsmallmatrix}$           & {\small 
								 Vector cross product in $\R^6$ 
                                 }
\\[1ex]
$\ls_AX^B = \begin{bsmallmatrix}
  \phantom{\ls^A o^{\wedge}_B} \ls^A R_B & 0_{3\times3} \\
           \ls^A o^{\wedge}_B  \ls^A R_B & \ls^A R_B \end{bsmallmatrix}$          & {\small 
             					Wrench transf.
                                from $B$ to $A$ 
                                ($\ls_AX^B = \ls^AX_B^{-T}$) 
                                }
\\ 
$\ls_B\rmf =
\begin{bsmallmatrix}
  \ls^C f_{A,B} \\
  \ls^C \tau_{A,B} 
\end{bsmallmatrix}
\in \R^{6}$                   & {\small 
								Coordinates of the wrench 
                                $\rmf$ w.r.t. $B$
                                }
\\
$
\ls^B\rmv_{A,B} \bar\times^* =
\begin{bsmallmatrix}
\ls^B\omega_{A,B}^\wedge & 
0_{3 \times 3} \\
\ls^Bv_{A,B}^\wedge & 
\ls^B\omega_{A,B}^\wedge
\end{bsmallmatrix}$             & {\small 
								   Dual cross product 
                                   in $\R^6$ 
                                   }
\end{tabular}
\begin{tabular}{@{}p{0.23\textwidth}p{0.22\textwidth}@{}}
$\ls_{B} \bM_B = 
\begin{bsmallmatrix}
      m 1_{3 \times 3}       &  m \, \ls^B c^{\wedge} \\ 
     -m \, \ls^B c^{\wedge}  &  \ls_B \bI_B
  \end{bsmallmatrix}
$                                & {\small 
								   Generalized inertia matrix
                                   w.r.t. frame $B$
                                   }
\end{tabular}\vspace{1ex}
In the expression for $\ls_{B} \bM_B$ 
(where $B$ is typically a body fixed frame), 
$m$ is the body mass, $\ls^B c$ the CoM coordinates, 
and $\ls_B \bI_B$ the rotational inertia w.r.t. $B$.
We use
$$\ls^D J_{A,C/B}$$ to
indicate the Jacobian relating the velocity 
of frame $C$ with respect to $A$
expressed in $D$ with
the velocity of the base link expressed in $B$, so that
$\ls^D \rmv_{A,C} = \ls^D J_{A,C/B}(q) \, \nu$ with 
$\nu = (\ls^B \rmv_{A,B}, \dot q_J)$.

\subsection{Frame velocity representation}
\label{subsec:bodyVel}
The velocity of $B$ w.r.t. $A$ is given by $\ls^A \dot{H}_B$, however, it is more common to express it as a one the following six-dimensional vectors,
\begin{align*}
  \ls^A \rmv_{A,B} 
& = 
  \begin{bmatrix} 
    \ls^A {\dot o}_B 
    - \ls^A \omega_{A,B}^\wedge \ls^A o_B 
  \\ 
    \ls^A \omega_{A,B}
  \end{bmatrix} ,
\\
  \ls^B \rmv_{A,B} 
  & = 
  \begin{bmatrix} 
    \ls^B R_A \ls^A \dot{o}_B  \\ 
    \ls^B R_A \ls^A \omega_{A,B}
  \end{bmatrix}, 
  & 
  \ls^{B[A]} \rmv_{A,B} 
  & =  
  \begin{bmatrix} 
    \ls^A \dot{o}_B \\ 
    \ls^A \omega_{A,B}
  \end{bmatrix} ,
\end{align*}
where $\ls^A \omega_{A,B} := (\ls^{A} \dot{R}_{B} \ls^A R_B^\top)^\vee$.
We refer to $\ls^A \rmv_{A,B}$, $\ls^B \rmv_{A,B}$,
and $\ls^{B[A]} \rmv_{A,B}$ as, respectively,
the {\em right-trivialized}, the {\em left-trivialized},
and the {\em mixed} velocity of $B$ w.r.t. $A$. 
The mixed representation is also known as \emph{hybrid} representation~\cite{bruyninckx1996symbolic}. 

The left- and right-trivialized representations are widespread in the literature of Lie group-based geometric mechanics \cite{murray1994mathematical} 
(where they are called \emph{spatial} and \emph{body} velocities) 
and recursive robot dynamics algorithms \cite{featherstone2008,jain2010robot} (where they are called \emph{spatial} velocities). The mixed velocity is commonly used in multi-task control frameworks \cite{siciliano2010robotics,chiaverini2008kinematically,nava2016}.

\subsection{Single Rigid Body Dynamics}
Given a rigid body whose position in space is
determined by $\ls^A H_L$ with $L$ fixed to the body
the classical Newton-Euler equations are written,
in a combined form, as
\begin{align}\label{eq:ELsingleBody}
 \ls_L \bM_L \ls^L \dot \rmv_{A,L} +
 \ls^L \rmv_{A,L} \bts  \ls_L \bM_L \ls^L \rmv_{A,L}  
& = 
  \ls_L \rmf
\end{align}
with $\ls_L \rmf$ denoting the external wrench (combined force and torque vector) expressed w.r.t. $L$ and $\bts$ denotes the dual 6D cross product ($\ls^L \rmv_{A,L} \bts$ is equivalent
to $-\ad^*_{\ls^L \rmv_{A,L}}$ in the language of Lie groups). 
We use the letter L since a rigid body on an articulated 
mechanism is usually referred to as a {\em link}.

\section{Rigid-body dynamics}\label{sec:momentum}

\subsection{Floating systems with gravity and external contact forces}

Consider a robotic system whose configuration is given 
by $q = (H, \rms) \in \SE(3) \times \R^{n_J}$,
where $H = \ls^A H_B$ denotes the base link's homogeneous transformation
matrix and $\rms$ represents the displacement 
of the $n_J$ internal joints. We will refer to
$H$ as the {\em pose} and to $\rms$ as the {\em shape} 
of the robot.
The velocity of the mechanism is
parameterized via $\nu = (\rmv, \dot \rms) \in \R^6 \times \R^{n_J}$ with $\rmv = \ls^B \rmv_{A,B}$ 
denoting the velocity of the base frame with respect the
inertial frame expressed in the base frame 
($\ls^B \rmv_{A,B}$ is a {\em left trivialized} velocity, cf. the notation section).

The dynamics of a floating articulated robotic system such as, e.g., a humanoid robot
\cite{wieber2015handbook} is usually written as
\begin{align}\label{eq:forcedstandardEL}
  M(q) \dot\nu + C(q, \nu) \nu + G(q)
  & = 
  [0 \,; \tau] + \sum_i (\ls^i J)^T \, \ls_i \rmf 
\end{align}
where $M$, $C$, and $G$ are, respectively, the mass matrix, Coriolis matrix, and potential force vector, $\tau$ is the internal joint torques, and  $\ls^i J$ and $\ls_i \rmf$ are the $i$-th contact Jacobian and contact force, both expressed
with respect to a contact frame $C_i$, 
fixed with respect to its corresponding link. 
To derive \eqref{eq:forcedstandardEL}, 
one can take a Newton-Euler approach 
summing up all the contributions of the internal and
external forces for each body using \eqref{eq:ELsingleBody}
or setting up a Lagrangian $L(H, s, \dot H, \dot s)$ and employ the Euler-Lagrange equations. However, as $H$ is not a vector quantity, 
either one use a local vector parametrization for $H$ (based, e.g., on Euler angles) or employ the tools from geometric mechanics and form the left trivialized Lagrangian
\begin{align}\label{eq:lt_lagrangian}
l( H, \rms, \rmv, \dot \rms) = 
\frac 12 
\begin{bmatrix}
  \rmv \\
  \dot \rms
\end{bmatrix}^T
\begin{bmatrix}
  \mathbb{L}(\rms)   & \mathbb{A}(\rms) \\
  \mathbb{A}^T(\rms) & \mathbb{S}(\rms) 
\end{bmatrix}
\begin{bmatrix}
  \rmv \\
  \dot \rms
\end{bmatrix}
\end{align}
where $\rmv = \ls^B \rmv_{A,B}$ satisfies 
$\dot H = H \rmv^\wedge$ and
$\mathbb{L}$, $\mathbb{A}$, and $\mathbb{S}$
are suitable partitions of the overall mass matrix $M(q)$
appearing in \eqref{eq:forcedstandardEL}
in accordance with the dimension of $\rmv$ and $\dot s$.
The matrix $\mathbb{L}$ is typically referred to
as the {\em locked inertia tensor} as is corresponds to the
(generalized) inertia of the entire mechanism 
computed with respect to $B$ assuming no motions of its internal joints.
To be more precise, one should write
$\mathbb{L}$ as $\ls_B\mathbb{L}_B$ and
$\mathbb{A}$ as $\ls_B\mathbb{A}$ 
to indicate the output (and, for $\mathbb{L}$, also input) frame 
that these transformations accept. This should 
help, e.g., to better interpret the
expression for the combined linear and angular
momentum given by \eqref{eq:momentum},
later in the text.

The dynamics of the articulated mechanism can 
be then derived using Hamel equations 
(see, e.g., 
\cite{marsden1993reduced},
\cite[Section 13.6]{marsden2013introduction}, \cite{Ball2012}), namely
\begin{align}\label{eq:NE}
\frac{d}{dt} \frac{\partial l}{\partial  \rmv} 
  + \rmv \bts \frac{\partial l}{\partial \rmv} 
  & = 0 
\\ \label{eq:EL}
\frac{d}{dt} \frac{\partial l}{\partial \dot \rms} 
  - \frac{\partial l}{\partial \rms} & = 0. 
\end{align}
Hamel equations are a combination of standard 
Euler-Lagrange equations \eqref{eq:EL}
and
Newton-Euler equations \eqref{eq:NE}, 
the latter also called Euler-Poincar\'e equations for a generic Lie group
\cite[Section 13.5]{marsden2013introduction}.
In the presence of internal and external forces 
and the presence of potential energy due to, e.g., 
the effect of gravity, 
\eqref{eq:NE}-\eqref{eq:EL} become
\begin{align}\label{eq:forcedNE}
\frac{d}{dt} \frac{\partial l}{\partial  \rmv}  
  + \rmv \bts \frac{\partial l}{\partial \rmv}
  & = H^{-1} \frac{\partial l}{\partial H} 
    + \sum_i (\ls^i X)^T \, \ls_i \rmf \\
 \label{eq:forcedEL}   
\frac{d}{dt} \frac{\partial l}{\partial \dot \rms} 
  - \frac{\partial l}{\partial \rms} 
  & =  \tau + \sum_i (\ls^i S)^T \, \ls_i \rmf ,
\end{align}
where $\tau$ and $\ls_i \rmf$ are as in \eqref{eq:forcedstandardEL},
$H^{-1} {\partial l}/{\partial H}$ 
the vector representation of the linear map
$w \mapsto D_1 l(H,\rms,\rmv,\dot\rms) \cdot H w^\wedge$, $w \in \R^6$, and 
$\ls^i X$ and $\ls^i S$ define, respectively, 
the pose and shape parts of the $i$-th contact Jacobian $\ls^i J$. 
More precisely, the (mixed) velocity of the $i$-th contact point
satisfies 
\begin{align}\label{eq:JacobianStructure}
  ^i \rmv
  & = \ls^i J \, \nu 
    = \ls^i X \rmv + \ls^i S \, \dot \rms 
\end{align} 
with
$^i \rmv := \ls^{C_i[A]} \rmv_{A,C_i}$ 
and 
$\ls^i J := \ls^{C_i[A]} J_{A,C_i/B}$
the $i$-th contact Jacobian (we refer to Section~\ref{sec:notation} for a clarification on the notation $\ls^{D} J _{A,C/B}$). Note that, 
by definition, $\ls^i X = \ls^{C_i[A]} X_B$, implying 
$\ls^i X^T = \ls_B X^{C_i[A]}$, a wrench transformation. 
Except for the notation,
\eqref{eq:forcedNE}-\eqref{eq:forcedEL} are 
equivalent to the more common \eqref{eq:forcedstandardEL}
but they provide extra structure that 
helps to understand the definition and time evolution of the total momentum
directly from the equations of motion.
From a computational point of view,
forward and inverse dynamics for \eqref{eq:forcedstandardEL} 
can be obtained using, e.g., 
the floating-base recursive Newton-Euler algorithm and composite-rigid-body algorithm 
presented in \cite{featherstone2008}.
 

\subsection{The total momentum expressed in the inertial frame}

The Lagrangian \eqref{eq:lt_lagrangian}
is not a function of $H$ meaning it is invariant 
with respect to a rigid transformation.
As shown in Appendix~\ref{sec:AppendixA}, standard 
results of geometric mechanics imply that the quantity 
\begin{align}\label{eq:momentum} 
  \ls_A \mathcal{J}
  & = 
  \ls_A X^B
  \left(  
     \mathbb{L} \, \rmv
     + 
     \mathbb{A} \, \dot\rms
  \right)
\end{align}
is a constant of motion for the unforced system. 
In \eqref{eq:momentum},
$\mathbb{L}$ and $\mathbb{A}$ are as in \eqref{eq:lt_lagrangian}
and $\ls_A X^B$ as in Section~\ref{sec:notation} with
$A$ denoting the inertial frame and $B$ the base link frame.

Recalling that \eqref{eq:lt_lagrangian} is obtained
by summing up all kinetic energies of each link dynamics
\eqref{eq:ELsingleBody} employing link Jacobian with a structure similar to \eqref{eq:JacobianStructure},
it is straightforward to recognize in $\ls_A \mathcal{J}$ 
the {\em total momentum} given by the sum of the all the linear
and angular momenta of each rigid body expressed with respect to the origin of $A$. 
When gravity and external forcing are present, $\ls_A \mathcal{J}$
evolves according to
\begin{align}\label{eq:forcedMomentum}
  \frac{d}{dt} \ls_A \mathcal{J} 
  & = 
  \ls_A X^B \, 
  \left( 
    H^{-1} \frac{\partial l}{\partial H} 
    + \sum_i  (\ls^i X)^T  \, \ls_i \rmf 
  \right) .
\end{align}
This result can be derived directly 
from a straightforward modification of Noether's theorem (for a proof of
the geometric version of Noether's theorem, see \cite[Chapter 3]{Bloch2003}). One careful inspection of the above formula shows, however, that \eqref{eq:forcedMomentum} is actually equivalent to \eqref{eq:forcedNE}, only written in the inertial frame~$A$. 

\subsection{The total momentum expressed at the center of mass}

The momentum of the system can be expressed also 
with respect to other frames. In particular, 
the frame $G := (p_{com}, [A])$,
that has as origin the combined CoM $p_{com}$ 
and the orientation of the inertial frame $A$,
is commonly found in the robotic literature \cite{orin2008centroidal,Orin2013}. 
With respect to $G$, the momentum is given by
\begin{align}
  \ls_G \mathcal{J}(H,\rms,\rmv, \dot\rmv) 
  & := \ls_G X^A(H,\rms) \, \ls_A \mathcal{J}(H,\rms,\rmv, \dot\rmv) .
\end{align} 
We refer to $\ls_G \mathcal{J}$ as the {\em centroidal momentum} (in accordance with, e.g., \cite{Orin2013}). Remarkably,
when no external forces and potential are present, 
this quantity is also constant as $\ls_A \mathcal{J}$ given in \eqref{eq:momentum} is constant and because $(\ls_G {\dot X}^A ) \ls_A \mathcal{J}$ is {\em always} a zero.  
This last fact is related to the angular momentum of the CoM. 
Denoting with $m$ the total mass, the angular momentum of the CoM is simply given by $\ls^A p_{com} \times m \, \ls^A {\dot p}_{com}$. 
The only difference between $\ls_G \mathcal{J}$ and $\ls_A \mathcal{J}$ is indeed that $\ls_A \mathcal{J}$ contains within its angular part (i.e., its last three elements) also the angular momentum of the CoM. Then
$(\ls_G {\dot X}^A ) \ls_A \mathcal{J} \equiv 0$ as
$\ls^A {\dot p}_{com} \times m \, \ls^A {\dot p}_{com} 
\equiv 0$. This also implies that replacing $A$ with $G$
in \eqref{eq:forcedMomentum} is all we need to obtain
the evolution of $\ls_G \mathcal{J}$.

\subsection{Locked and average velocities}
\label{sec:lockedvel}

In geometric mechanics, a special role is played by the left-trivialized locked velocity 
$\ls^B \rmv_{loc}$ defined in such a way that 
\begin{align}
  \ls_B \mathcal{J} 
  & = 
  \mathbb{L}(s) \rmv  + \mathbb{A}(s) \dot \rms 
  =:  
  \mathbb{L}(s) \, \ls^B \rmv_{loc}
\end{align}
or, equivalently,  
\begin{align}\label{eq:defvloc}
 \ls^B \rmv_{loc}  := \rmv  + \mathbb{L}^{-1}(s) \mathbb{A}(s) \dot \rms . 
\end{align}
The locked velocity is, for each instant of time,
the velocity at which the base link 
should move, while considering the internal joints as locked,
to get the same value of the momentum corresponding to the current velocity of the mechanism. Note how 
the locked velocity is expressed with respect to
the base frame, so it is only a function 
of $\rmv$, $\rms$, and $\dot\rms$. 

In the robotic literature, the {\em average velocity}
is defined as $\ls^G \rmv_{ave} := \ls^G X_B \ls^B \rmv_{loc}$
\cite{Orin2013}. Note that we can obtain
$\ls^G \rmv_{ave}$ from
\begin{align}
  \ls^G \rmv_{ave} 
  & = 
  \ls_G^{~} \mathbb{L}_G^{-1} \, \ls_G^{~} \mathcal{J}
\end{align}
with ${\ls_G \mathbb{L}_G} := \ls_G X^B {\ls_B \mathbb{L}_B} \ls^B X_G$ block diagonal. 
At this point is key to observe that 
$\ls^G \rmv_{ave}$ depends also on $H$,
other than $\rmv$, $\rms$, and $\dot\rms$
while $\ls^B \rmv_{loc}$ does not.
We will show in the next section, 
that the latter is then preferable 
in answering the integrability question 
for the average (angular) velocity. 

As a side note, it is worth recalling 
at this point that both
the locked velocity and average velocity
can be used to block diagonalize the mass matrix.
However, as the kinetic energy is now written
with respect to another velocity one
must be careful in deriving the equations of motion
from it. When using the locked velocity, 
Lagrange-Poincar\'e equations 
(see, e.g., \cite[Chapter 13]{marsden2013introduction})
can be use to retrive a set of equations
equivalent to \eqref{eq:NE}-\eqref{eq:EL}.


\section{The centroidal frame and the main result}\label{sec:centroidal}

In this section, we recall the concept of centroidal frame
and provide the algebraic condition ensuring
that it depends only on the configuration $(H,s)$.

\subsection{The centroidal frame and the main question}
\label{sec:centroidalFrame}

The definition of the locked velocity given by \eqref{eq:defvloc}
allows one to write the momentum with respect to $A$ simply as 
\begin{align}
  \ls_A \mathcal{J} 
    & = 
  \ls_A X^B \ls_B \mathbb{L}_B \, \ls^B\rmv_{loc} .
\end{align}
Posing 
$\ls_A \mathbb{L}_A := \ls_A X^B \ls_B \mathbb{L}_B \ls^B X_A$ 
and $\ls^A\rmv_{loc} := \ls^A X_B \ls^B\rmv_{loc}$,
the equation above simply becomes
\begin{align}
  \ls_A \mathcal{J} 
    & = 
   \ls_A \mathbb{L}_A \, \ls^A\rmv_{loc} . 
\end{align}
For a given initial configuration $\ls^A H_C(t_0) \in \SE(3)$ 
at a given time $t = t_0$, one can then integrate the differential equation
\begin{align} \label{eq:centroidalFrame_ODE}
  \ls^A {\dot H}_C = \ls^A \rmv_{loc}^\wedge \, \ls^A H_C 
\end{align}
to get a frame that has, as right trivialized velocity, the locked velocity $\ls^A \rmv_{loc}$. A key remark that keeps appearing in the robotic literature \cite{orin2008centroidal,Orin2013,ding2016dynamic}
is that the solution $\ls^AH_C$ of \eqref{eq:centroidalFrame_ODE}
is not guaranteed to depend only on the configuration $(H, s)$. 
That is, when $\ls^AH_C$ satisfies $\ls^AH_C(t) = \ls^A F(H(t), s(t))$
for a suitable function $\ls^A F: \SE(3) \times \R^{n_J} \rightarrow \SE(3)$.
It is well known that this does not always happen as in the simulation results presented in \cite{Wieber2005} and
references therein. What appear to
be less known is that there is 
a simple condition to check when this
happens and that this is related to
asking if the columns of $\mathbb{L}^{-1} \mathbb{A}$ are partial derivatives of 
a nonlinear function of the joint displacements.

Note that the initial condition for $C$ at $t=t_0$ is completely arbitrary and therefore the function $\ls^A F(H, s)$, where it exists, is determined up to an arbitrary {\em right} multiplication by an element of $SE(3)$. Furthermore, given the fact that the kinematics \eqref{eq:centroidalFrame_ODE} is actually
invariant to an arbitrary pose transformation, one gets the extra condition that if $\ls^A F$ exists,
it must be of the form $\ls^A F(H,s) = H \, \ls^B F(s)$
with $\ls^B F(s): \R^{n_J} \rightarrow \SE(3)$. The dependence of the centroidal frame only on the configuration is therefore equivalent to the existence of this $\SE(3)$-valued function of the internal joints.
Tipically, the frame $C$ at $t=t_0$ is taken to have 
its origin coinciding with the total center of mass because
it can be shown (see remark below) that \eqref{eq:centroidalFrame_ODE} will maintain the equivalence of the two points: this justifies
the use of centroidal frame as name for $C$.

{\bf Remark.} Independently of the existence of a configuration-dependent-only frame satisfying $\ls^AH_C(t) = \ls^A F(H(t), s(t))$,
the CoM $p_{com}$ has always constant coordinates with respect to a frame $C$ that evolves according to \eqref{eq:centroidalFrame_ODE}. 
In formulas, $\ls^C \dot p_{com} = d/dt \, \ls^C p_{com} \equiv 0$. A proof of this fact is given in the Appendix~\ref{sec:AppendixB}. Therefore, in case indeed we can find a function $\ls^A F$ such that $\ls^AH_C(t) = \ls^A F(H(t), s(t))$, it then seems natural to choose the frame $C$ such that $p_{com}$ is its origin. 
\hfill$\blacksquare$

\subsection{The main result}

Define
\begin{align}
   \mathcal{A}(s) := \mathbb{L}^{-1}(s) \mathbb{A}(s)
\end{align}
with $\mathbb{L}(s)$ and $\mathbb{A}(s)$ 
expressed with respect to $B$ as in \eqref{eq:lt_lagrangian}. Then, the following holds.

\begin{proposition} \label{prop:integrability}
The centroidal frame satisfying \eqref{eq:centroidalFrame_ODE}
is integrable, that is, there exist a (differentiable) function
$\ls^B F : \R^{n_J} \rightarrow \SE(3)$
such that
\begin{align} \label{eq:integrability}
  \ls^AH_C(t) = H(t) \, \ls^B F(s(t))
\end{align}\label{eq:closure}
if and only if 
\begin{align} \label{eq:curvatureconnectionSE3}
  \mathcal{B}_{ij}
& :=
  \frac{\partial \mathcal{A}_i}{\partial s_j} -
  \frac{\partial \mathcal{A}_j}{\partial s_i} +
  \mathcal{A}_i \times \mathcal{A}_j \equiv 0 
\end{align}
for every $i,j \in \{1,2,\dots,n_J\}$, where
$\mathcal{A}_i$, $\mathcal{A}_j$ are the i-th and $j$-th columns
of $\mathcal{A}$ and $\times$ the vector cross product
in $\mathbb{R}^6$.\hfill$\square$
\end{proposition}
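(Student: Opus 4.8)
The plan is to reduce \eqref{eq:integrability} to a reconstruction problem --- recovering an $\SE(3)$-valued map from prescribed right-trivialized partial derivatives --- and then to identify $\mathcal{B}_{ij}\equiv 0$ as exactly the compatibility condition for that problem. Throughout I write $F=\ls^B F(s)$ and $\partial_j=\partial/\partial s_j$, and use $\ls^A X_B=\Ad_H$ together with $\dot H = H\rmv^\wedge$.

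For necessity I would assume \eqref{eq:integrability}, differentiate $\ls^A H_C = H F$ and right-multiply by $(\ls^A H_C)^{-1}=F^{-1}H^{-1}$ to obtain $\ls^A\rmv_{loc}^\wedge = \dot H H^{-1} + H\big(\sum_j (\partial_j F) F^{-1}\dot s_j\big)H^{-1}$. Using $\dot H H^{-1}=(\ls^A X_B\,\rmv)^\wedge$ and the conjugation identity $H\xi^\wedge H^{-1}=(\ls^A X_B\,\xi)^\wedge$, this collapses to $\ls^A\rmv_{loc}=\ls^A X_B\big(\rmv+\sum_j\eta_j\dot s_j\big)$ with $\eta_j^\wedge:=(\partial_j F)F^{-1}$. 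Comparing with $\ls^A\rmv_{loc}=\ls^A X_B\,\ls^B\rmv_{loc}=\ls^A X_B(\rmv+\mathcal{A}\dot s)$, invertibility of $\ls^A X_B$ and arbitrariness of $\dot s$ force $(\partial_j F)F^{-1}=\mathcal{A}_j^\wedge$ for every $j$; that is, the columns of $\mathcal{A}$ are the right-trivialized partials of $F$. I would then invoke Schwartz's theorem: computing $\partial_i\partial_j F$ and $\partial_j\partial_i F$ from $\partial_j F=\mathcal{A}_j^\wedge F$ and equating them gives $(\partial_i\mathcal{A}_j-\partial_j\mathcal{A}_i)^\wedge=[\mathcal{A}_i^\wedge,\mathcal{A}_j^\wedge]=(\mathcal{A}_i\times\mathcal{A}_j)^\wedge$, the last step being the correspondence between the $\se(3)$ matrix commutator and the $\R^6$ cross product ($\times\leftrightarrow\ad$). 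Applying the $\vee$ map yields exactly $\mathcal{B}_{ij}\equiv 0$.

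Sufficiency is the converse and the step I expect to be the main obstacle. Here I would read $\{\mathcal{A}_j^\wedge\}$ as the coefficients of the overdetermined first-order linear system $\partial_j F=\mathcal{A}_j^\wedge F$, $j=1,\dots,n_J$, and run the previous computation backwards: $\mathcal{B}_{ij}\equiv 0$ is precisely the Frobenius complete-integrability condition $\partial_i(\mathcal{A}_j^\wedge)+\mathcal{A}_j^\wedge\mathcal{A}_i^\wedge=\partial_j(\mathcal{A}_i^\wedge)+\mathcal{A}_i^\wedge\mathcal{A}_j^\wedge$ for this system. Since $\mathbb{L}(s)$ is positive definite, $\mathcal{A}=\mathbb{L}^{-1}\mathbb{A}$ is smooth on all of $\R^{n_J}$, which is contractible and hence simply connected, so the system admits a global solution $F$ with freely prescribable base value; I would fix $F(s_0)=I$. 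Because each generator $\mathcal{A}_j^\wedge$ lies in $\se(3)$, integrating along any path keeps $F$ in $\SE(3)$, giving $F:\R^{n_J}\to\SE(3)$. Setting $\ls^A H_C:=HF(s)$ and reversing the necessity computation shows its right-trivialized velocity is $\ls^A\rmv_{loc}$, so it solves \eqref{eq:centroidalFrame_ODE} and realizes \eqref{eq:integrability}.

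For the intended readership I would avoid citing Frobenius as a black box and instead give the hands-on construction: define $F(s)$ by path-ordered integration of $\partial_j F=\mathcal{A}_j^\wedge F$ from $s_0$ to $s$, and prove path-independence by showing the integral is invariant under elementary square deformations of the path, the infinitesimal defect around such a square being governed exactly by $\mathcal{B}_{ij}$; simple connectedness of $\R^{n_J}$ then upgrades this to a globally well-defined $F$. The two points needing care are that $F$ remains in $\SE(3)$ (secured by the generators lying in $\se(3)$) and that the integration suffers no finite escape (the flow of an $\se(3)$-valued field lives on the group, so $F$ is defined for all $s$).
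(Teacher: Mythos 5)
Your proof is correct. The necessity half is essentially the paper's own argument: you derive $(\partial_j F)F^{-1}=\mathcal{A}_j^\wedge$ by matching the right-trivialized velocity of $HF(s)$ against $\ls^A\rmv_{loc}=\ls^AX_B(\rmv+\mathcal{A}\dot s)$ and then apply Schwartz's theorem plus the identity $[\mathcal{A}_i^\wedge,\mathcal{A}_j^\wedge]=(\mathcal{A}_i\times\mathcal{A}_j)^\wedge$; this matches \eqref{eq:AXB_vpAsdots}--\eqref{eq:secondDerF} step for step (and is in fact slightly more explicit than the paper about why integrability \emph{forces} the columns of $\mathcal{A}$ to be the right-trivialized partials of $F$, which the paper presents as an ``interpretation''). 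The sufficiency half is where you genuinely diverge: you treat $\partial_jF=\mathcal{A}_j^\wedge F$ as a Frobenius-type overdetermined system and obtain $F$ by path-ordered integration over arbitrary paths, with path-independence secured by invariance under elementary square deformations (defect governed by $\mathcal{B}_{ij}$) and simple connectedness of $\R^{n_J}$. The paper instead commits to one specific ``staircase'' path --- its recursion \eqref{eq:defF}--\eqref{eq:Deltai0} integrates first along $s_1$, then $s_2$, and so on --- and then verifies directly, by differentiating under the integral sign and inserting \eqref{eq:curvatureconnectionSE3}, that the resulting $F$ satisfies $\partial_iF=\mathcal{A}_i^\wedge F$ for the ``earlier'' indices as well (carried out for $n_J=2$ in \eqref{eq:suffstep1}--\eqref{eq:suffstep4}). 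The two routes rest on the same computation, but they buy different things: yours is the standard differential-geometric argument and scales to general $n_J$ without the ``straightforward but tedious'' induction the paper waves at, whereas the paper's choice avoids any appeal to path-independence or homotopy, in keeping with its stated goal of accessibility to readers without differential geometry. One point you should make explicit if you write out the square-deformation version: closing the argument requires the same uniqueness-of-linear-ODE step the paper uses implicitly, namely that $E:=\partial_1\Delta_2-\mathcal{A}_1^\wedge\Delta_2$ satisfies $\partial_t E=\mathcal{A}_2^\wedge E$ with $E=0$ at $t=0$ and hence vanishes identically; the paper's phrasing ``with no contradiction on the assumption'' is really this Gronwall/uniqueness argument, and your write-up should not inherit its apparent circularity. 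Your remarks that the generators lie in $\se(3)$ (so $F$ stays on the group) and that a linear ODE with smooth coefficients has no finite escape are both correct and worth keeping.
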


\begin{proof} The result is, roughly speaking,
a generalization to $\SE(3)$
of Schwartz's theorem (symmetry of second derivatives)
and the related theorem stating that closed 
differential forms are locally exact 
(existence of a potential function).
We start by the observation that
if we take an arbitrary sufficiently smooth function $F: \R^{n_J} \rightarrow \SE(3)$ the right trivialized velocity associated to the homogeneous transformation matrix 
\begin{align}
  H \, F(s) \in \SE(3) ,
\end{align}
with $H = H(t) \in \SE(3)$ and $\rms = \rms(t) \in \R^{n_J}$ 
sufficiently smooth curves is equal to 
\begin{align} \label{eq:AXB_vpAsdots}
  \left(\frac{d}{dt}(HF) (HF)^{-1} \right)^\vee =
  \ls^A X_B \, ( \, \rmv + \mathcal{A}(s) \dot \rms\, ) .
\end{align}
The above formula has exactly the same structure of
the locked velocity given in \eqref{eq:defvloc} 
with the extra interpretation that the columns of 
the matrix $\mathcal{A}(s)$ above satisfy,
for $i \in \{1,2,\dots,n_J\}$, 
\begin{align}\label{eq:trivializedDerivatives}
  \mathcal{A}_i(s) 
& = 
  \left(
  \frac{\partial F}{\partial s_i}(s) F(s)^{-1} 
  \right)^{\vee} \in \R^6 ,
\end{align}
that is, $\mathcal{A}_i$ can be
interpreted as the right trivialized partial derivatives of 
the nonlinear function $F$ depending on the internal joints.
Thinking $F$ as a function $F: \R^{n_J} \rightarrow \R^{4 \times 4}$ 
one knows that its second derivative must satisfy
\begin{align}\label{eq:schwartz}
  \frac{\partial^2 F}{\partial s_j \partial s_i} 
& = 
  \frac{\partial^2 F}{\partial s_i \partial s_j}  
\end{align}
for every $i$ and $j$. Writing 
$\partial F(s) / \partial s_i = \mathcal{A}^{\wedge}_i(s)  F(s)$
using \eqref{eq:trivializedDerivatives}, one can write
the left hand side of \eqref{eq:schwartz} as
\begin{align}\label{eq:secondDerF}
  \left( \frac{\partial \mathcal{A}_i}{\partial s_j} \right)^{\wedge}(s) F(s)
  +
  \mathcal{A}^{\wedge}_i(s) \mathcal{A}^{\wedge}_j(s) F(s)
\end{align}
and similarly for the right hand side by exchanging $i$ with $j$.
Equating the two expressions and multiplying on the right by the inverse of $F(s)$, one obtains straightforwardly
\eqref{eq:curvatureconnectionSE3} 
by recalling Jacobi identity 
$
  \mathcal{A}^{\wedge}_i \mathcal{A}^{\wedge}_j
  -
  \mathcal{A}^{\wedge}_j \mathcal{A}^{\wedge}_i
  =
  (\mathcal{A}_i \times \mathcal{A}_j)^{\wedge}
$. This proves the necessity of the condition.
Sufficiency is provided constructively, 
using
\begin{align}\label{eq:defF}
  F(s) 
  & = 
  \Delta_{n_J}(s_1, s_2, \dots, s_{n_J}) 
\end{align}
where $\Delta_{n_J}$ is given below and 
showing that $F$ in \eqref{eq:defF} satisfies
\eqref{eq:trivializedDerivatives} as long as
\eqref{eq:curvatureconnectionSE3} holds.
In \eqref{eq:defF}, $\Delta_{n_J}$ 
and the functions $\Delta_{i}: \mathbb{R}^i \rightarrow \SE(3)$, $i \in \{1,2,\dots,n_J\}$
are defined recursively 
for a given value $s = (s_1, \dots, s_{n_J})$
as 
the solution at $\sigma = s_i$ of the matrix differential equation 
\begin{align}\label{eq:Deltai}
\frac{d}{d\sigma} \Delta_i 
& =
  \mathcal{A}_i^\wedge(s_1, \dots, s_{i-1}, \sigma, ..., 0) \, 
  \Delta_i
\end{align}
with $\Delta_i$ evaluated at $(s_1, \dots, s_{i-1}, \sigma) \in \mathbb{R}^i$  with initial condition
\begin{align}\label{eq:Deltai0}
\Delta_{i}(s_1, \dots, s_{i-1}, 0) =
\Delta_{{i-1}}(s_1, \dots, s_{i-1}) 
\end{align}
and 
$\Delta_{1}(0) = F(0) \in \SE(3)$ arbitrary (the desired orientation of the centroidal frame for $s = 0$). Note that 
$d\Delta_i/d\sigma $ in \eqref{eq:Deltai} equals
$ \partial \Delta_i (s_1, \dots, s_{i-1},\sigma)/ \partial s_i$.
In the following, we provide the proof that \eqref{eq:trivializedDerivatives}
holds for $n_J = 2$. Proving \eqref{eq:trivializedDerivatives} 
for $n_J > 2$ is a straightforward but tedious calculation
that follows from the technique used when $n_J = 2$. Given $F(s) := \Delta_2(s_1, s_2)$, it is immediate to see one must have
$\partial F / \partial s_2 (s) = \mathcal{A}^\wedge
_2(s) F(s) $. Showing $\partial F / \partial s_1 (s) = \mathcal{A}^\wedge_1(s) F(s) $ is, instead, more involved
and requires \eqref{eq:curvatureconnectionSE3}.
From \eqref{eq:defF}, \eqref{eq:Deltai}, and \eqref{eq:Deltai0}, 
for the case $n_J = 2$, one
can write $\partial F / \partial s_1 (s)$ as
\begin{align}\label{eq:suffstep1}
  \int_0^{s_2} 
    \frac{\partial}{\partial s_1} 
    \left(
      \mathcal{A}_2^\wedge(s_1, t) \Delta_2(s_1,t) 
    \right)
  dt
  + 
  \frac {\partial \Delta_2}{\partial s_1}(s_1,0) ,
\end{align}
that can be further expanded into
\begin{align}\label{eq:suffstep2}
  \int_0^{s_2} 
    \frac{\partial \mathcal{A}_2^\wedge}{\partial s_1}
    \Delta_2 
    +
    \mathcal{A}_2^\wedge
    \frac{\partial \Delta_2}{\partial s_1}
  dt
  + 
  \mathcal{A}_1^\wedge(s_1, 0)
  \Delta_2(s_1, 0) ,
\end{align}
where the terms inside the integral are evaluated at
$(s_1, t)$. Using \eqref{eq:curvatureconnectionSE3}, one gets
that the integral above can be written as
\begin{align}\label{eq:suffstep3}
  \int_0^{s_2} 
    \left( 
      \frac{\partial \mathcal{A}_1^\wedge}{\partial t}
      +
      \left( \mathcal{A}_1 \times \mathcal{A}_2 \right) ^\wedge
    \right)  
    \Delta_2 
    +    
    \mathcal{A}_2^\wedge
    \frac{\partial \Delta_2}{\partial s_1}
  dt
\end{align}
Using again the Jacobi identity and recalling that ${\partial \Delta_2}/{\partial t} = \mathcal{A}_2^\wedge \Delta_2$, the above integral can be rewritten as
\begin{align}\label{eq:suffstep4}
  \int_0^{s_2} 
    \frac \partial {\partial t}
    \left(
      {\mathcal{A}_1^\wedge}
      \Delta_2
    \right)
    + 
    \mathcal{A}_2^\wedge
    \left(
       \frac{\partial \Delta_2}{\partial s_1}
       -
       \mathcal{A}_1^\wedge \Delta_2
    \right)
  dt
\end{align}
where all terms are evaluated at $(s_1,t)$. 
By assuming
${\partial F}/{\partial s_1}(s) = {\partial \Delta_2}/{\partial s_1}(s) = \mathcal{A}_1^\wedge(s) \Delta_2(s)$, the second term in the integral
of \eqref{eq:suffstep4} vanishes and allowing one to rewrite \eqref{eq:suffstep2} as 
\begin{align}
  \int_0^{s_2} 
    \frac \partial {\partial t}
    \left(
      {\mathcal{A}_1^\wedge}(s_1,t)
      \Delta_2(s_1,t)
    \right)
  dt
  +
  \mathcal{A}_1^\wedge(s_1, 0) \Delta_2(s_1, 0) 
\end{align}
which equals $\mathcal{A}_1^\wedge(s) \Delta_2(s) = \mathcal{A}_1^\wedge(s) F(s)$, with no contradiction on the assumption. 

\end{proof}

{\bf Remark.}
When the underlying Lie group is $\SE(3)$,
the expression given in \eqref{eq:curvatureconnectionSE3}
is equivalent to the curvature of a principal connection (see \cite[Chapter 2]{Bloch2003} and reference therein 
for a concise and convenient summary of principal connections).
The curvature of a principal connection is typically written as 
\begin{align}\label{eq:defCurvatureStandard} 
  \mathcal{B}^d_{\alpha\beta}(s)
  :=
  \frac{\partial \mathcal{A}^d_\alpha}
       {\partial s^\beta} -
  \frac{\partial \mathcal{A}^d_\beta}
       {\partial s^\alpha} +
  C^d_{ab} 
       \mathcal{A}^a_\alpha 
       \mathcal{A}^b_\beta 
\end{align}
for $d$, $\alpha$, $\beta \in \{1,\dots,n_J\}$ with
$\mathcal{A}^i_j$ denoting the entry $(i,j)$ of $\mathcal{A}$ 
and $C^a_{bc}$ the structure constants of the Lie group, 
representing the Lie bracket operation that, for $\SE(3)$, is the 6D vector cross product appearing in \eqref{eq:curvatureconnectionSE3}. 
Therefore, the right hand side of \eqref{eq:curvatureconnectionSE3} is equivalent to \eqref{eq:defCurvatureStandard}. 
We find \eqref{eq:curvatureconnectionSE3}, however,
more accessible than \eqref{eq:defCurvatureStandard}
in particular to researchers in multibody dynamics 
employing spatial vector notation \cite{featherstone2008handbook},
not acquainted with differential geometry.
An alternative proof of Proposition~\ref{prop:integrability}
could be obtained by showing that when
\eqref{eq:defCurvatureStandard} is identically zero (flatness of the connection) this implies the existence of a function $\ls^B F$
such that \eqref{eq:trivializedDerivatives} holds. 
We are not aware, though, of an accessible source where 
this is clearly stated as in the proof of the proposition above
and this why we though worth presenting a proof that
requires a basic knowledge of the velocity kinematics of $\SE(3)$ and standard results of calculus (Schwartz's theorem)
to be understood. 
The closest we can get is Chapter II, Section 9,
of the classical text \cite{KoNu96B_FoundDiffGeoVol1}, which clearly requires a deep knowledge on differential geometry to be fully understood.
For sake of completeness, we also mention 
that \eqref{eq:schwartz} could be replaced with the second covariant derivative of $F$ with respect to the $(0)$ Cartan-Schouten connection \cite{Saccon2013},
which is known to be symmetric, to obtain the following expression, similar to \eqref{eq:secondDerF} but now coordinate independent,
\begin{align}
  \left( 
  \frac{\partial \mathcal{A}_i}{\partial s_j}  (s)
  +
  \frac 12 \, 
  \mathcal{A}_i(s) \times \mathcal{A}_j(s)
  \right )^\wedge F(s) \,,  
\end{align}
which leads then again to \eqref{eq:curvatureconnectionSE3}.
Principal connections have been also employed to
study nonholonomic locomotion, as
the nonholomic constraint of a robot can be written in a form equivalent to \eqref{eq:AXB_vpAsdots}: see, e.g., \cite{Ostrowski1998}. We hope 
the presentation given in this paper 
will help also accessing that literature.

\subsection{The link between locked and average velocity}

In this subsection, we elaborate further on the 
remark given in the subsection~\ref{sec:centroidalFrame}
showing that, when choosing $\ls^A o_C = \ls^A  p_{com}$, 
the centroidal kinematic is simply given by
\begin{align} \label{eq:dotAoC}
  \ls^A {\dot o}_C 
& = 
  \ls^A {\dot p}_{com} 
\\ \label{eq:dotARC}
  \ls^A {\dot R}_C 
& = 
  \ls^A \omega_{loc}^\wedge \ls^A R_C 
\end{align}
with $\ls^A \omega_{loc}$ given by the angular velocity component of $\ls^A \rmv_{loc}$. 
We first show that, independently 
from where $\ls^A o_C$ is located,
$\ls^G \rmv_{ave}$ and $\ls^A \rmv_{loc}$
satisfy
\begin{align}\label{eq:GVave}
  \begin{bmatrix}
    \ls^A \dot p_{com} \\
    \ls^A \omega_{loc}
  \end{bmatrix} 
& =  
  \ls^G \rmv_{ave}  
  = 
  \ls^G X_A \ls^A \rmv_{loc} .
\end{align}
This shows that the
average angular velocity $\ls^G \omega_{ave}$
coincides with the locked angular velocity $\ls^A \omega_{loc}$.
Proving \eqref{eq:GVave} is obtained by employing the remark in the subsection~\ref{sec:centroidalFrame} regarding the invariance of the coordinates of the CoM and the following lemma.\\[.5ex]
\noindent{\bf Lemma.}
Given the differential equation
\begin{align}
   \ls^A {\dot H}_C & = \ls^A \rmv^\wedge \, \ls^A H_C
\end{align}
assume there is a point $p$ such that its coordinates
$\ls^C p$ with respect to $C$ are constant. Define $G = (p, [A])$
so that $G$ has $p$ as origin and the same orientation of $A$. 
Then, the velocity $\ls^A \rmv$ written with respect to
$G$ equals
\begin{align}
  \ls^G \rmv 
& = 
  \ls^G X_A \ls^A \rmv    
  = 
  \begin{bmatrix}
    \ls^A \dot p \\
    \ls^A \omega
  \end{bmatrix}
\end{align}
where $\ls^A \omega$ denotes the angular velocity component
of $\ls^A \rmv$. \hfill$\blacksquare$\\[.5ex]

Finally, to obtain \eqref{eq:dotAoC}-\eqref{eq:dotARC}, 
we employ \eqref{eq:GVave} to express $\ls^A \rmv_{loc}$
in terms of $\ls^G \rmv_{ave}$ and substitute it
into \eqref{eq:centroidalFrame_ODE},
obtaining
\begin{align}
  \ls^A {\dot o}_C 
& = 
  \ls^A {\dot p}_{com} 
  + \ls^A \omega_{loc}^\wedge (\ls^A o_C -  \ls^A  p_{com})
\\
  \ls^A {\dot R}_C 
& = 
  \ls^A \omega_{loc}^\wedge \ls^A R_C .
\end{align}
where we recall 
$\ls^A H_C = (\ls^A R_C, \ls^A o_C; 0_{1\times 3}, 1) \in \SE(3)$. As we have selected $\ls^A o_C = \ls^A  p_{com}$,
the result follows.

\section{A numerical example}

In this section, a simple example 
to illustrate the use of
the integrability condition 
\eqref{eq:curvatureconnectionSE3} is given.
We consider a mechanism with two 
internal DOFs. This is the minimal
number of DOFs to observe the nonintegrability,
because, for one DOF, \eqref{eq:curvatureconnectionSE3}
is always trivially satisfied.

We numerically integrate \eqref{eq:centroidalFrame_ODE},
performing a motion that starts and ends at the same 
internal joint configuration. 
The base link will not, in general, return to the original pose.
The centroidal frame will always return 
to the original orientation relative to the base link, 
if and only if \eqref{eq:curvatureconnectionSE3} holds. 
In both cases, 
as explained in the Remark of Section~\ref{sec:centroidalFrame},
its CoM will return to its original position.

An illustration of the mechanism is given in 
Figure~\ref{fig:clock}.
The mechanism is composed by three rigid bodies: 
a free-floating base link (yellow) 
and two distal links (cyan and magenta). 

\begin{figure}
\centering
\includegraphics[width=.3\textwidth]{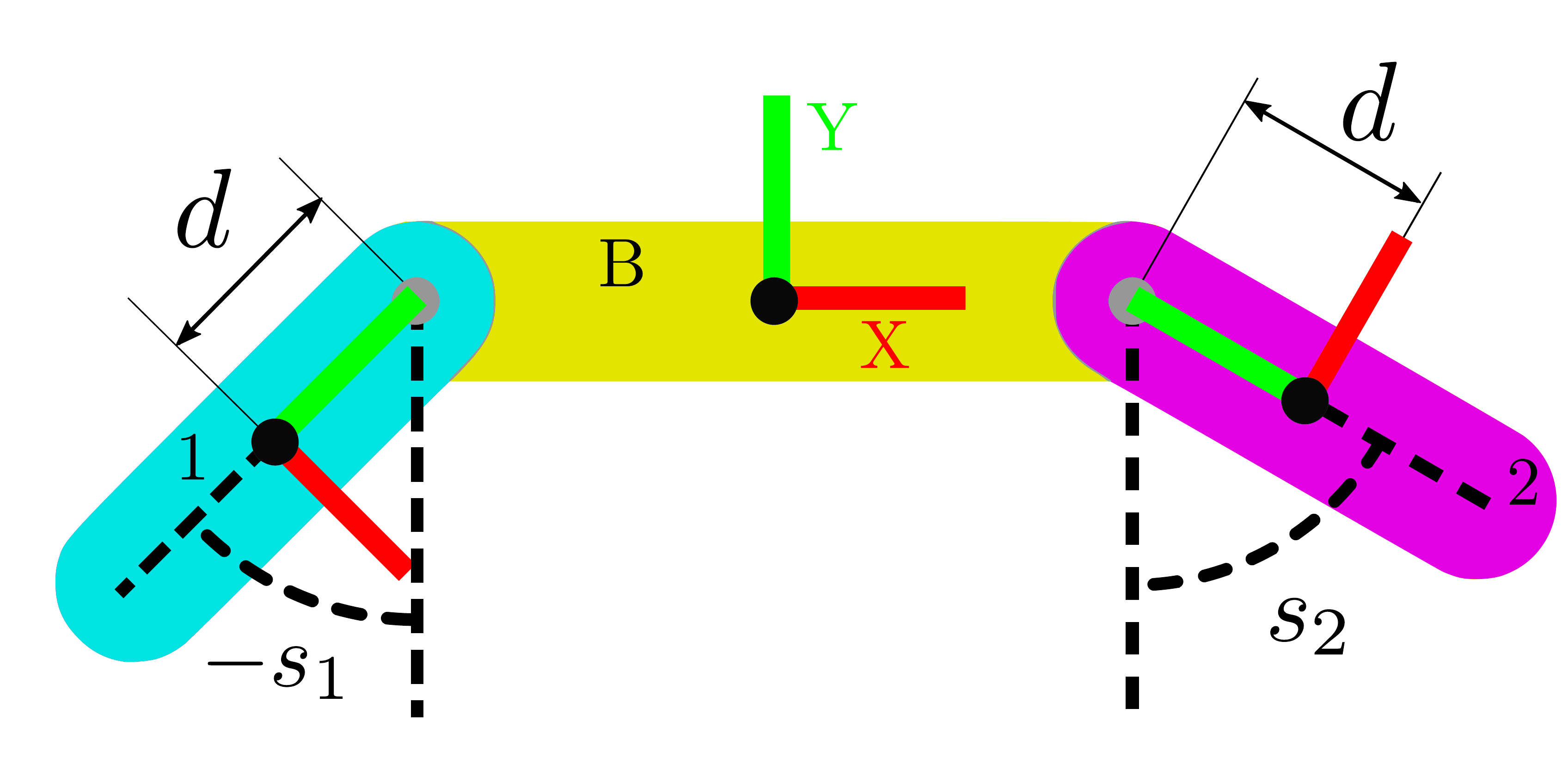}
\caption{The free-floating three link model. 
In this figure, $s_1$ and $s_2$
represent the relative orientations
of the two distal links with respect to the base. See main text
for a full description of the figure. 
}
\label{fig:clock}
\end{figure}

The distal links are connected directly to the base
via two independently actuated revolute joints. 
For both links, the offset between their center of mass and 
joint axis is identical and denoted 
with $d$. 
To each body we firmly attach three coordinate frames,
indicated as $B$, $1$ and $2$ in the figure,
each centered at the corresponding body's CoM. 
The base link mass is $1$ kg. 
Each distal link mass is also $1$ kg.
For the base link, the rotational inertia 
(about the axis passing through the CoM 
and orthogonal to the base link face) is 
$4$ kg ${\textrm m}^2$. For distal links,
the inertia is $1$ kg ${\textrm m}^2$. The rotational inertia 
with respect to the other directions are non influential 
(we are considering a planar mechanism) 
and can be assigned arbitrarily finding the same result
provided below. 

We verify \eqref{eq:curvatureconnectionSE3}
for two different values of $d$: namely, 
for $d = 1$ and $d = 0$. For $d=1$, 
$\mathcal{B}_{12} = -\mathcal{B}_{21}$ is equal, up to a division by
the factor ${\left(2\, C_{1-2} + 6\, S_1 - 6\, S_2 - 28\right)}^2$, to 
\begin{align*} 
  \begin{bmatrix}
  \scriptstyle 
     2 ( C _1 + C\!_2) \, 
     \left(4\, C\!_1 + 4\, C\!_2 - 3\, C\!_1\, S\!_2 + 3\, C\!_2\, S\!_1\right) 
  \\    
  \scriptstyle 
2\, C\!_1\, \left(4\, S\!_1 + 4\, S\!_2 - 3\, S\!_1\, S\!_2 - 3\, S\!_2 S\!_2 \right) + 2\, C\!_2\, \left(4\, S\!_1 + 4\, S\!_2 + 3\, S\!_1\, S\!_2 + 3\, S\!_1 S\!_1 \right)
  \\
    0
  \\
    0
  \\
    0
  \\
    \scriptstyle 
  - 18\, S\!_{1-2} - 24\, C\!_1 - 24\, C\!_2
  \end{bmatrix}
\end{align*}
where $S_1 := \sin(s_1)$,
$S_2 := \sin(s_2)$, $C_1 := \cos(s_1)$,
$C_2 := \cos(s_2)$, $S_{1-2} := \sin(s_1-s_2)$,
and $C_{1-2} := \cos(s_1-s_2)$.
For $d=0$, $\mathcal{B}_{12} = \mathcal{B}_{21} \equiv 0$.
Details of the straightforward but tedious computations 
are not provided for space limitations.

The conclusion is that, just for $d = 0$, the
integration of the average angular velocity 
will {\em always} produce a centroidal frame whose
orientation is only a function of the
internal joint displacements.
This is confirmed by the animation snapshots 
given in Figure~\ref{fig:snapshots} 
corresponding to 
two simulations for different values of $d$. 
The complete animation is available as a multimedia attachment to this paper.
For both cases, the joints follow
the sinusoidal trajectories given by
\begin{align*}
s_1(t) = \frac{3\pi}{2} \left(\cos{\left(\frac{2\pi}{T}t\right)} -1 \right), \ 
s_2(t) = \frac{\pi}{2} \sin{\left(\frac{2\pi}{T}t\right)},
\end{align*}
starting and ending in a mechanism configuration with the distal links in a vertical position. 
These results are independent of the 
particular initial pose and velocity of the
base and therefore, in Figure~\ref{fig:snapshots}, 
only the relative pose of the centroidal frame 
with respect the base link is shown. 

\begin{figure*} 
\centering
\begin{tikzpicture}
\node[anchor=south west,inner sep=0] (image) at (0,0,0) 
{\includegraphics[width=.16\linewidth]{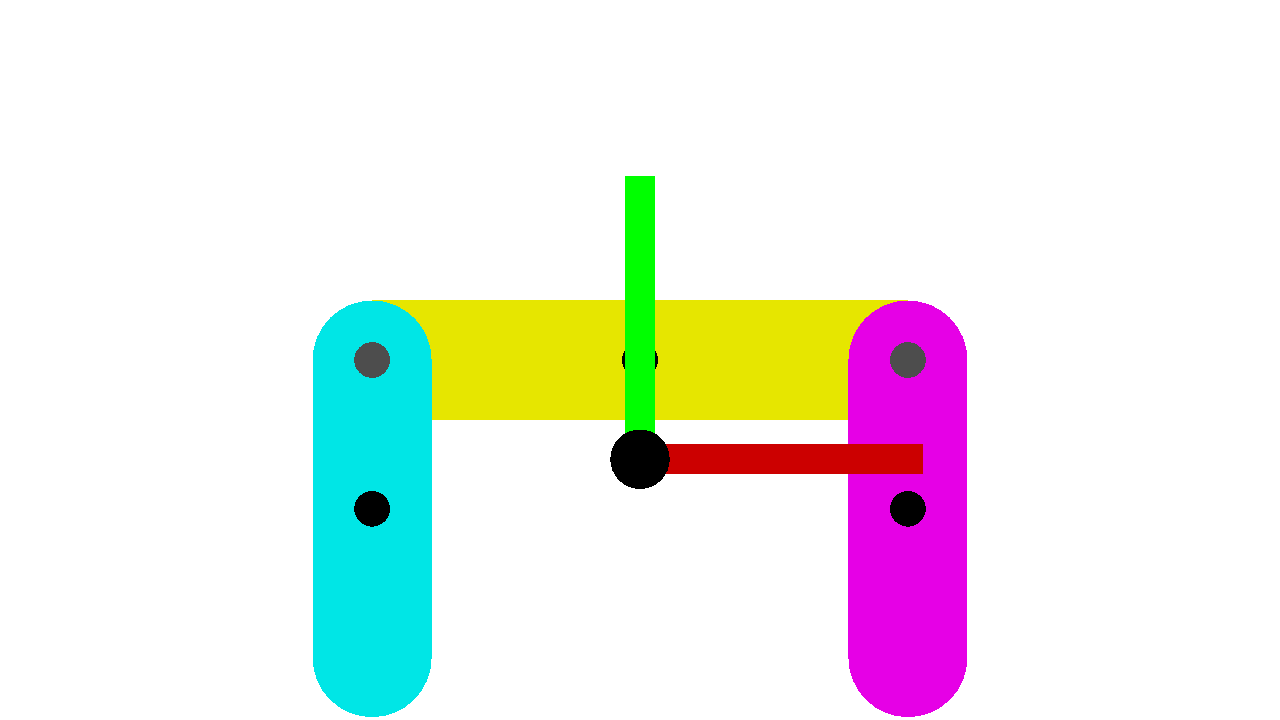}};
\draw [<-,thick,cyan,line width=1pt] (0.50,0.20) arc (-180:-90:0.3);
\draw [->,thick,magenta,line width=1pt] (2.12,-0.07) arc (-90:-00:0.3);
\end{tikzpicture}
\hfill
\begin{tikzpicture}
\node[anchor=south west,inner sep=0] (image) at (0,0,0) 
{\includegraphics[width=.16\linewidth]{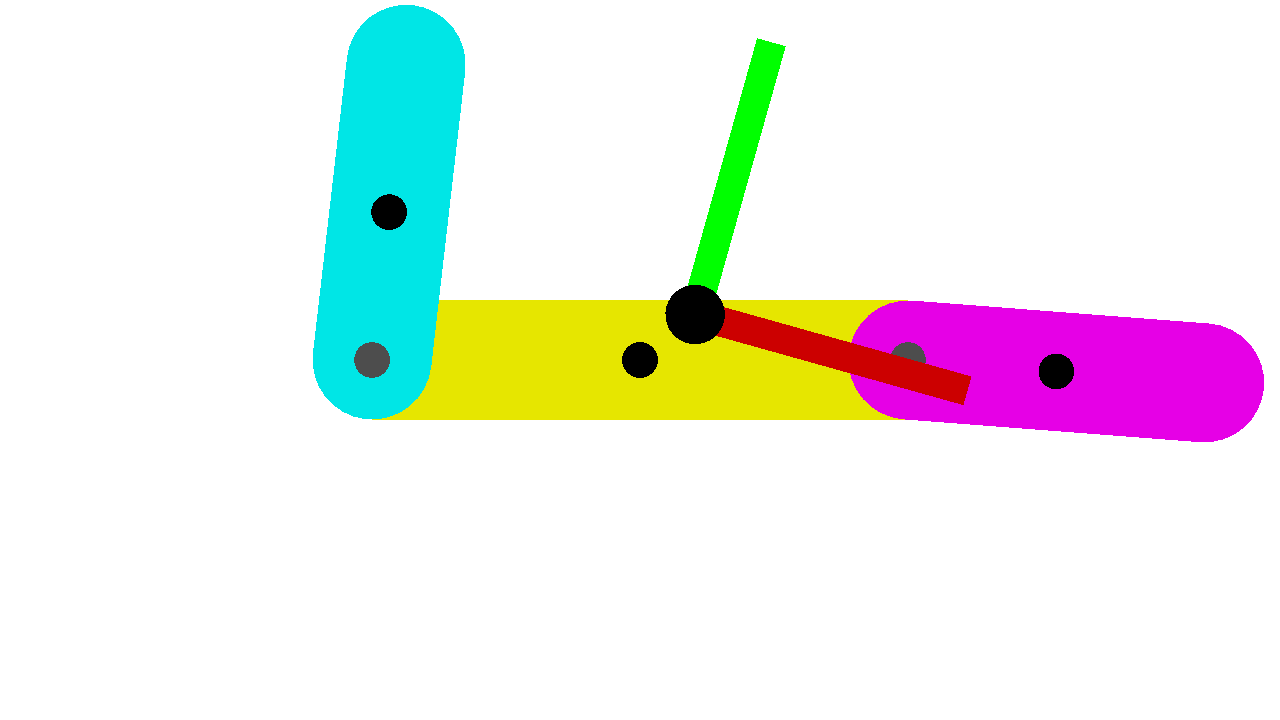}};
\draw [->,thick,cyan,line width=1pt] (0.66,1.7) arc (135:45:0.3);
\draw [<-,thick,magenta,line width=1pt] (2.7,0.5) arc (-90:-00:0.3);
\end{tikzpicture}
\hfill
\begin{tikzpicture}
\node[anchor=south west,inner sep=0] (image) at (0,0,0) 
{\includegraphics[width=.16\linewidth]{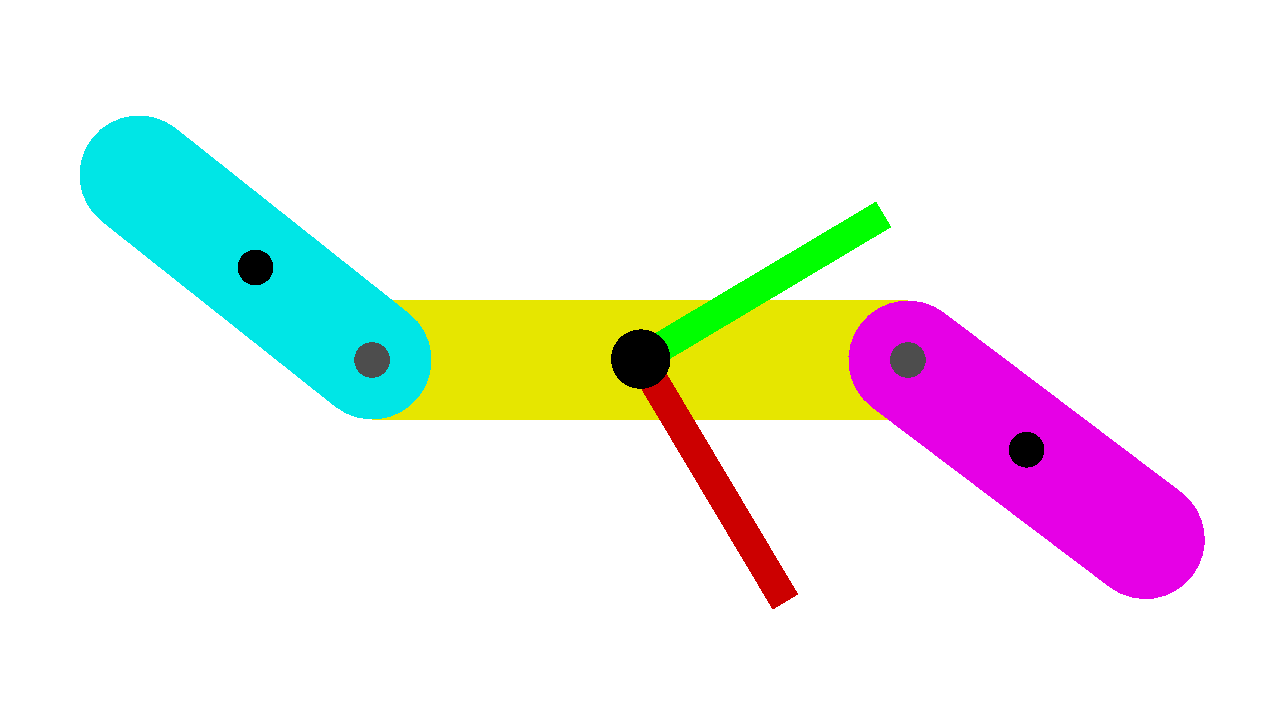}};
\draw [->,thick,cyan,line width=1pt] (0.1,1.4) arc (150:60:0.3);
\draw [<-,thick,magenta,line width=1pt] (2.57,0.15) arc (-90:-00:0.3);
\end{tikzpicture}
\hfill
\begin{tikzpicture}
\node[anchor=south west,inner sep=0] (image) at (0,0,0) 
{\includegraphics[width=.16\linewidth]{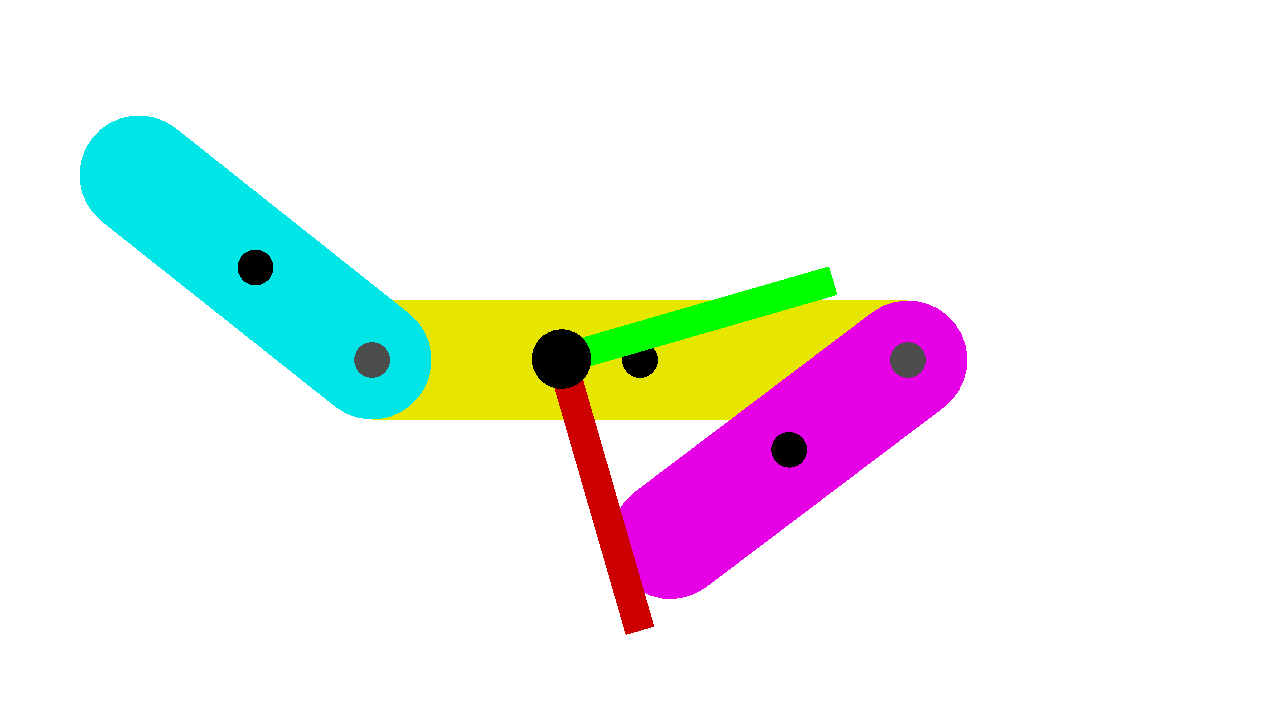}};
\draw [<-,thick,cyan,line width=1pt] (0.1,1.4) arc (150:60:0.3);
\draw [<-,thick,magenta,line width=1pt] (1.37,0.2) arc (-145:-45:0.3);
\end{tikzpicture}
\hfill
\begin{tikzpicture}
\node[anchor=south west,inner sep=0] (image) at (0,0,0) 
{\includegraphics[width=.16\linewidth]{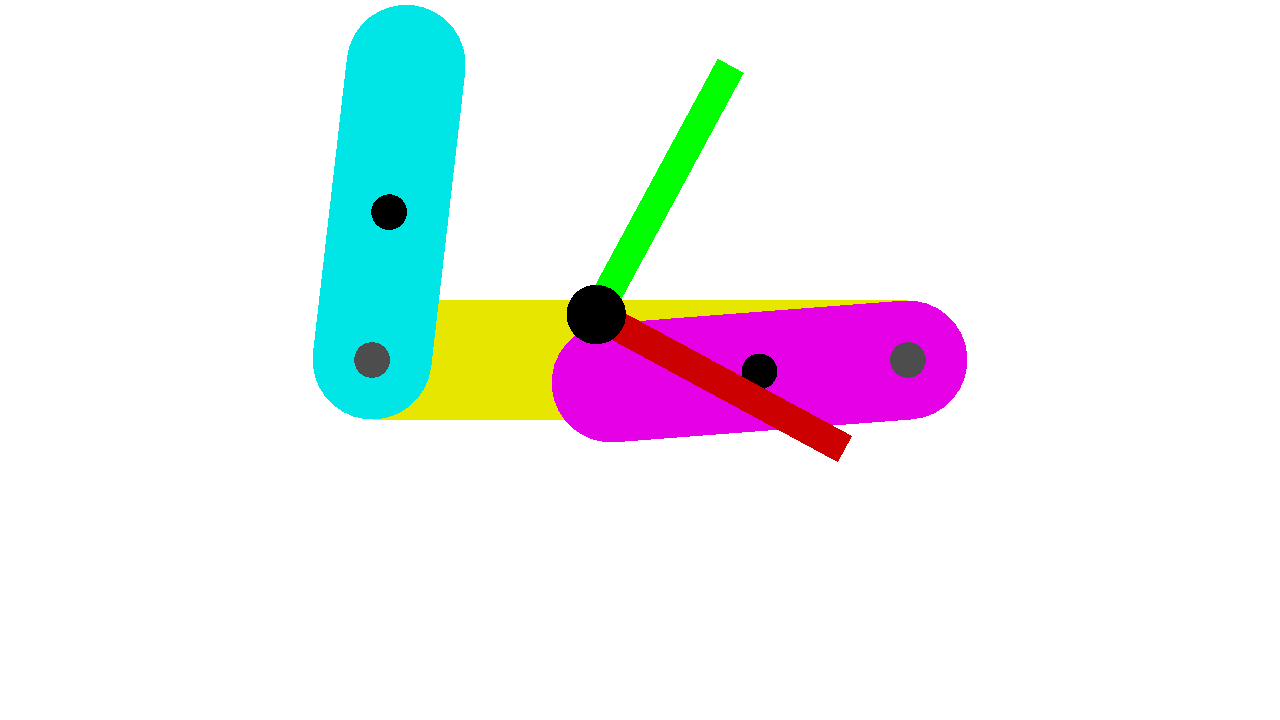}};
\draw [<-,thick,cyan,line width=1pt] (0.6,1.6) arc (150:60:0.3);
\draw [->,thick,magenta,line width=1pt] (1.2,0.7) arc (-170:-80:0.3);
\end{tikzpicture}
\hfill
\begin{tikzpicture}
\node[anchor=south west,inner sep=0] (image) at (0,0,0) 
{\includegraphics[width=.16\linewidth]{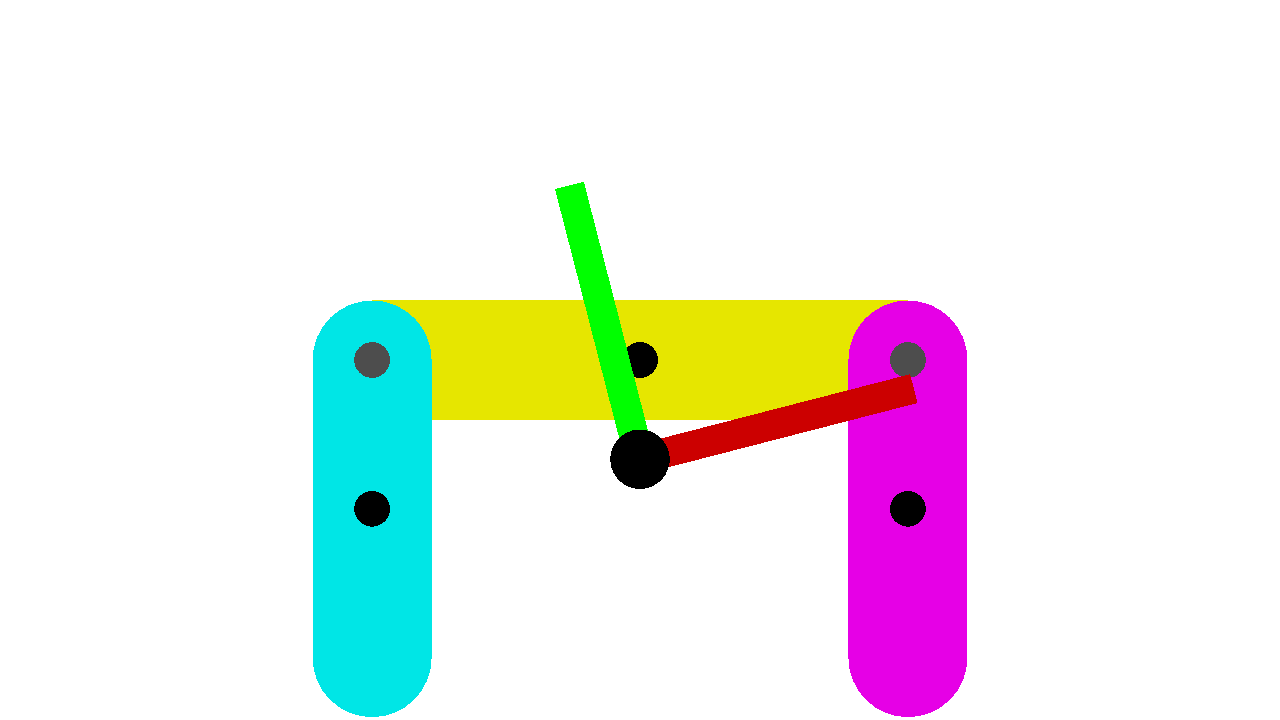}};
\end{tikzpicture}
\\~\\
\minipage{0.16\textwidth}
\begin{tikzpicture}
\node[anchor=south west,inner sep=0] (image) at (0,0,0) 
  {\includegraphics[width=\linewidth]{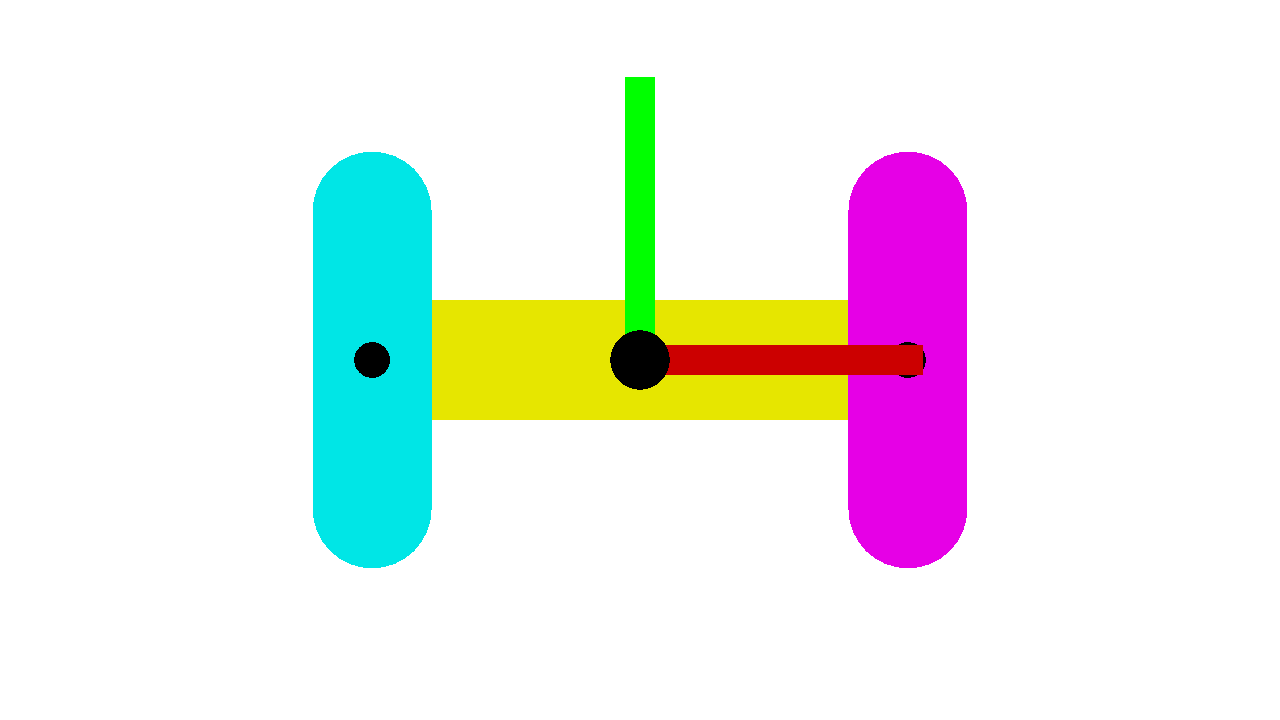}};
\draw [<-,thick,cyan,line width=1pt] (0.50,0.55) arc (-180:-90:0.3);
\draw [->,thick,magenta,line width=1pt] (2.12,0.26) arc (-90:-00:0.3);
  \end{tikzpicture}
  \\[1ex]  
  \centering 
  $t = 0$\ s
\endminipage\hfill
\minipage{0.16\textwidth}
\begin{tikzpicture}
\node[anchor=south west,inner sep=0] (image) at (0,0,0) 
  {\includegraphics[width=\linewidth]{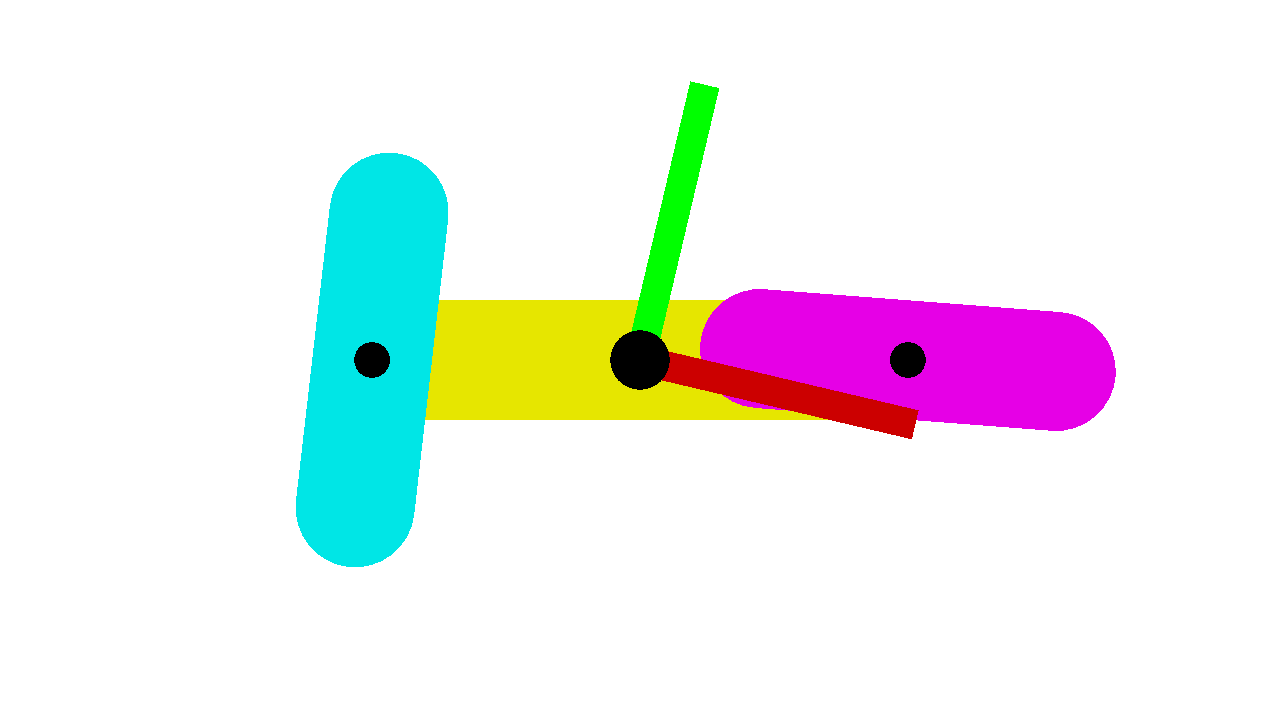}};
  \draw [->,thick,cyan,line width=1pt] (0.66,1.35) arc (135:45:0.3);
\draw [->,thick,magenta,line width=1pt] (2.4,0.6) arc (-90:-00:0.3);
  \end{tikzpicture}
  \\[1ex]  
  \centering 
  $t = 2$\ s
\endminipage\hfill
\minipage{0.16\textwidth}
\begin{tikzpicture}
\node[anchor=south west,inner sep=0] (image) at (0,0,0) 
  {\includegraphics[width=\linewidth]{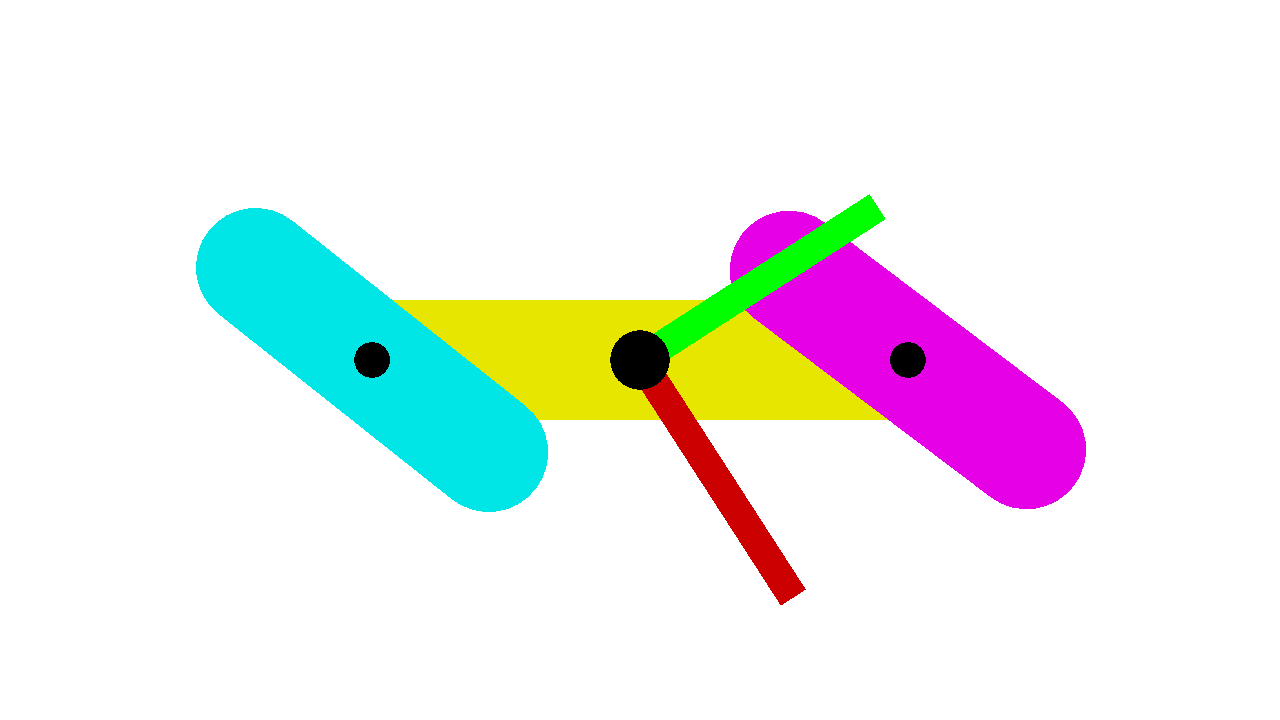}};
  \draw [->,thick,cyan,line width=1pt] (0.3,1.15) arc (150:60:0.3);
\draw [<-,thick,magenta,line width=1pt] (2.3,0.35) arc (-90:-00:0.3);
  \end{tikzpicture}
  \\[1ex] 
  \centering 
  $t = 4$\ s
\endminipage\hfill
\minipage{0.16\textwidth}
\begin{tikzpicture}
\node[anchor=south west,inner sep=0] (image) at (0,0,0) 
  {\includegraphics[width=\linewidth]{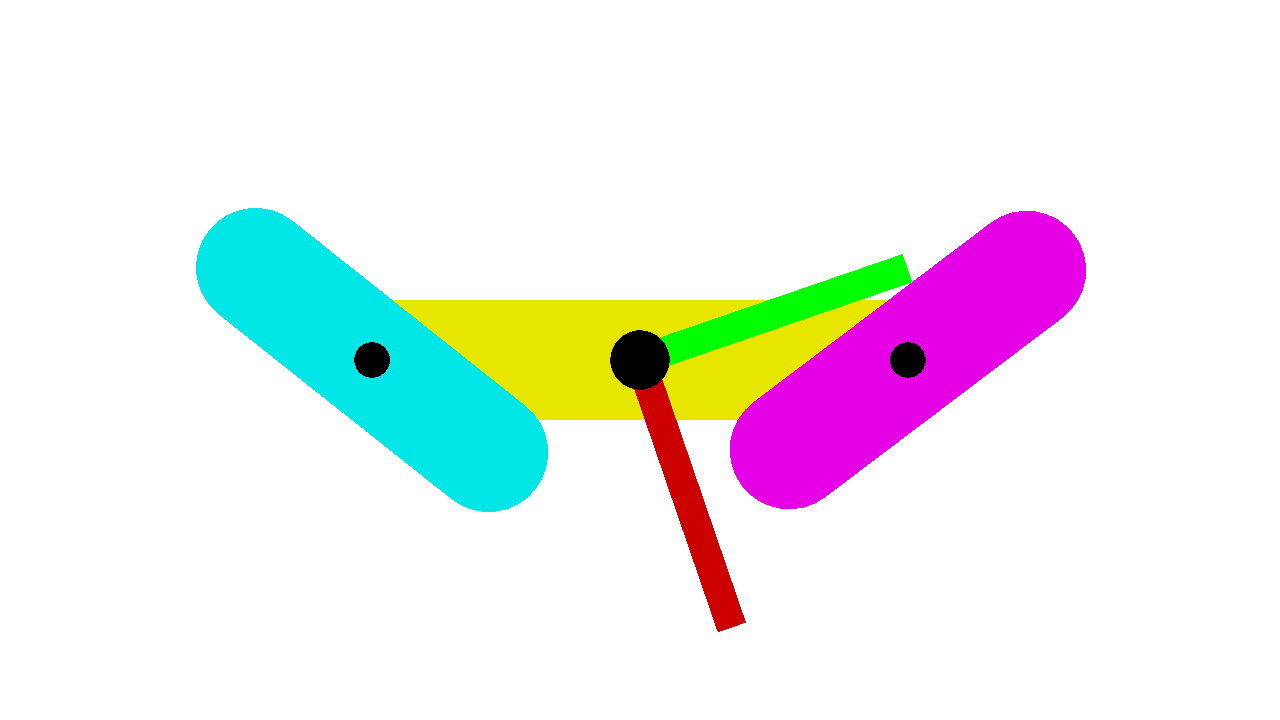}};
  \draw [<-,thick,cyan,line width=1pt] (0.3,1.15) arc (150:60:0.3);
\draw [<-,thick,magenta,line width=1pt] (1.7,0.45) arc (-145:-45:0.3);
  \end{tikzpicture}
  \\[1ex]  
  \centering 
  $t = 6$\ s
\endminipage\hfill
\minipage{0.16\textwidth}
\begin{tikzpicture}
\node[anchor=south west,inner sep=0] (image) at (0,0,0) 
  {\includegraphics[width=\linewidth]{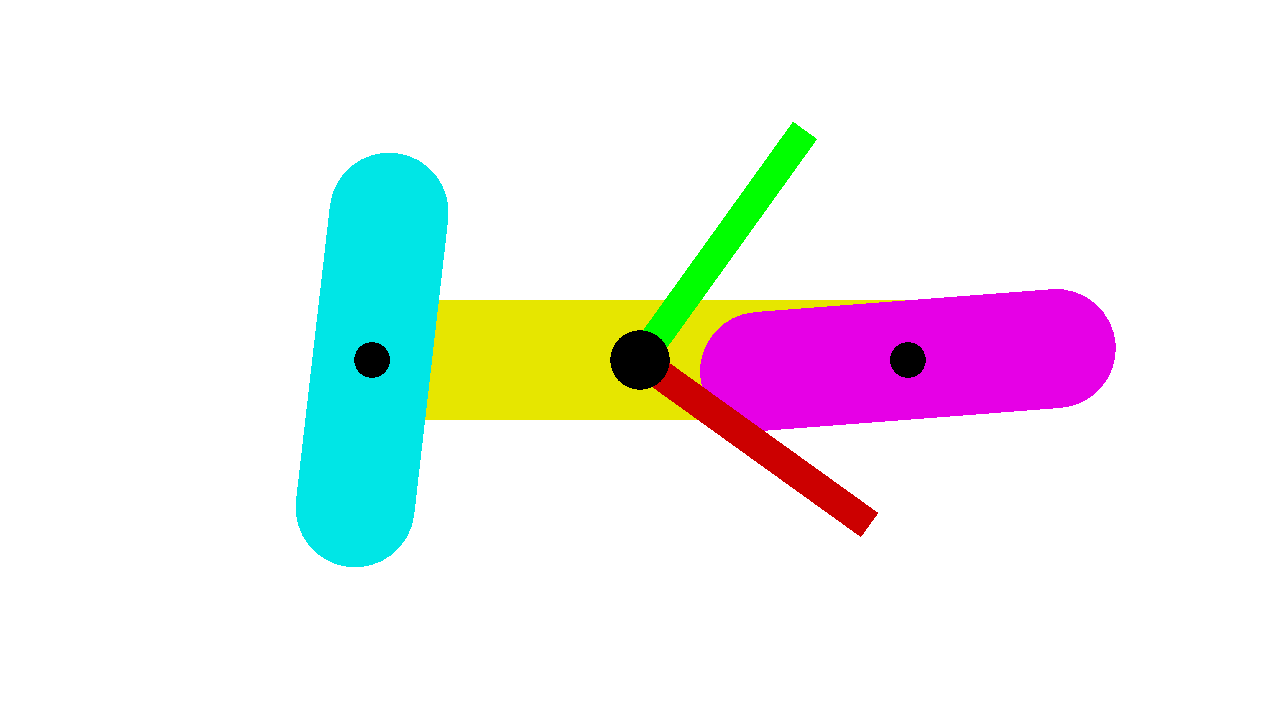}};
  \draw [<-,thick,cyan,line width=1pt] (0.6,1.3) arc (150:60:0.3);
\draw [->,thick,magenta,line width=1pt] (1.4,0.7) arc (-170:-80:0.3);
  \end{tikzpicture}
  \\[1ex] 
  \centering 
  $t = 8$\ s
\endminipage\hfill
\minipage{0.16\textwidth}
\begin{tikzpicture}
\node[anchor=south west,inner sep=0] (image) at (0,0,0) 
  {\includegraphics[width=\linewidth]{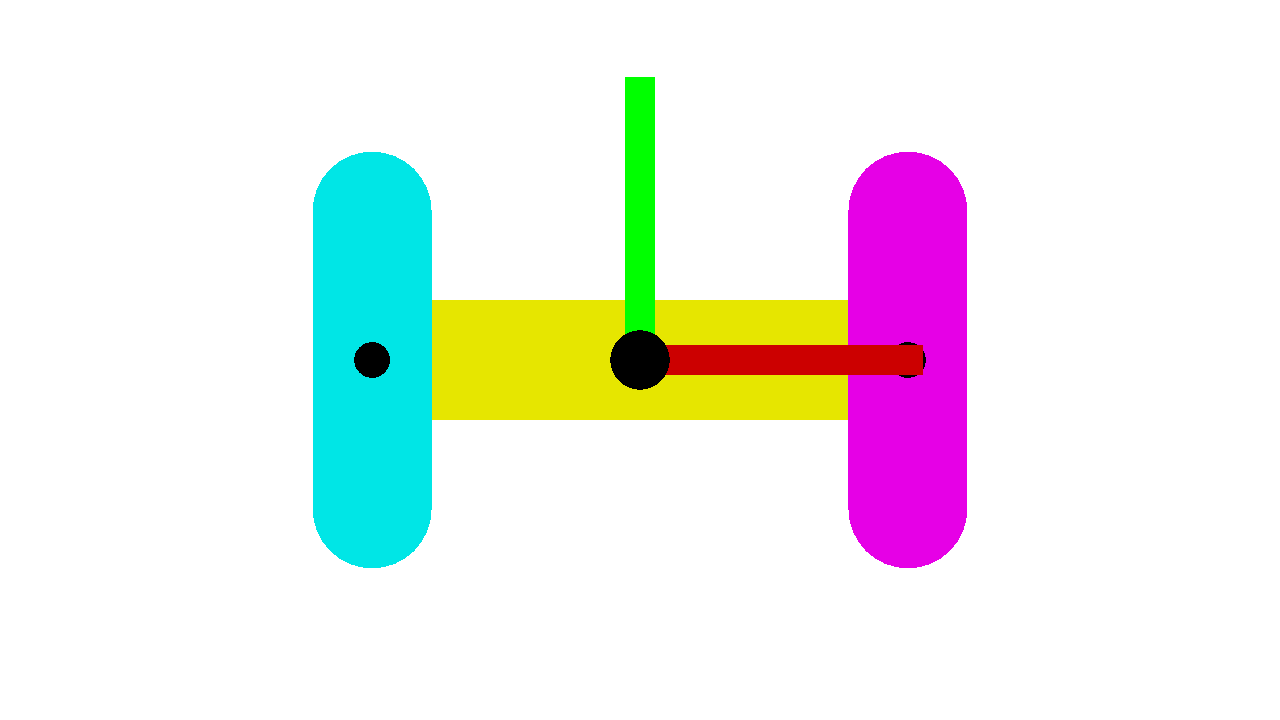}};
  \end{tikzpicture}
  \\[1ex] 
  \centering 
  $t = 10$\ s
\endminipage
\caption{Evolution of the centroidal frame. 
Nonintegrable case with $d = 1$ (first row).
Integrable case with $d = 0$ (second row) }
\label{fig:snapshots}
\end{figure*}

\section{Conclusion and discussion}\label{sec:conclusion}

In this paper we have established a clear 
link between the concept of average
angular velocity found in the robotics and multibody dynamics literature and the 
concept of locked velocity from the geometric mechanics literature. We have 
provided an accessible definition
and proof on the key algebraic condition
that can be used to establish 
when the centroidal frame (in particular, its orientation) depends only on the current robot
configuration and not on its time evolution.
For systems with a small number of links, the condition could be checked symbolically and, for more complex mechanism, we expect efficient algorithms could be easily developed
as just the differentiation of the matrices $\mathbb{L}$ and $\mathbb{A}$, typically involved in the computation of the Coriolis' matrix associated to the dynamics, is required.
Computationally, our experience suggests to compute the locked velocity with respect the frame whose origin is the CoM and orientation given by the base link frame. The advantage of using this frame is that the locked inertia matrix becomes diagonal, 
still depending only on the shape variables.

The centroidal frame is always integrable
when dealing with a system possessing only one internal degree of freedom. Note how this result
would remain valid for a mechanism where virtual constraints are employed to ensure that this
happens, making the internal degrees of freedom algebraically related to a single variable that acts as the only internal degree of freedom (see, e.g., the concept of gate variable in \cite{Grizzle2001} and the work that stemmed from it in employing virtual constraints in the context of robot locomotion \cite{Grizzle2014}). Our presentation might therefore help in taking a different perspective to those results. 

We have shown that mechanisms with
two internal degrees of freedom can also possess an integrable centroidal frame, but this is not guaranteed as common experience and previous investigations (e.g., the so called falling cat problem)
have already shown. It might be interesting to understand if there are some rules in constructing a (nontrivial) mechanism such that the integrability is satisfied. The centroidal frame for those mechanism might provide an interesting natural output to use in controlling the gross motion of the systems both in position and in orientation. Even when dealing with a mechanism where the centroidal frame is not integrable, one idea could be to try to assign the external wrenches so as to guarantee the integrability condition: one of the motivations to continue investigating the centroidal dynamics even further.

\appendix 

\subsection{The momentum map and the total momentum}
\label{sec:AppendixA}

Noether's theorem can be employed to conclude
that, when no gravity and external forces are applied, the momentum $\ls_A J$ given by \eqref{eq:momentum} is constant. 
In the context of geometric mechanics, \eqref{eq:momentum} is the momentum map $\mathcal{J}$ defined as 
\begin{align}\label{eq:defmomentummap}
  \left< \mathcal{J}( q , \dot q ), \xi \right> 
  & = 
  \mathbb{F} L ( q, \dot q ) \cdot \xi_Q (q)   
\end{align}
where $\mathbb{F} L ( q, \dot q ) \cdot z := \lim_{t\rightarrow 0}
(L(q , \dot q + t z ) - L(q, \dot q)) / t$ denotes the fiber derivative of 
the Lagrangian in the direction $z \in T_q Q$ and $\xi_Q (q)$  
the infinitesimal generator of the group action formally defined as 
$\xi_Q (q) = d/dt|_{t=0} \Phi_{\exp(t \xi)}(q)$.  
The function $\Phi_g$ is the action of the symmetry group 
on the configuration space: in the context of this paper,
$g \in \SE(3)$, the configuration space $Q = SE(3) \times \R^{n_J}$,
and the group action $\Phi_g: Q \rightarrow Q$ is simply given by
\begin{align}\label{eq:groupaction}
  \Phi_g(H,\rms) = (g H, \rms) ,
\end{align}
corresponding to a rigid transformation of the entire robot 
according to $g$ that leaves
invariant the shape $\rms$. The infinitesimal generator
associated to \eqref{eq:groupaction} is therefore
\begin{align}
  \xi_Q(q) = (\xi^\wedge H, 0) \in T_{(H, \rms)} Q   
\end{align}
and after straightforward computations one gets that 
the momentum map equals \eqref{eq:momentum} 
as the right hand side of \eqref{eq:defmomentummap} is 
\begin{align}\label{eq:FLxiQ}
\mathbb{F} L ( q, \dot q ) \cdot \xi_Q (q)  =
\begin{bmatrix}
  \rmv \\
  \dot \rms
\end{bmatrix}^T
\begin{bmatrix}
  \mathbb{L}(\rms)   & \mathbb{A}(\rms) \\
  \mathbb{A}^T(\rms) & \mathbb{S}(\rms) 
\end{bmatrix}
\begin{bmatrix}
  \ls^B X_A \, \xi \\
  0
\end{bmatrix} .
\end{align}
For the reader that is familiar with Lie group theory, note that, in \eqref{eq:FLxiQ}, $\ls^B X_A = \Ad_{H^{-1}}$. For more details on momentum maps and related concepts, we refer the interested reader to \cite[Chapter 11]{marsden2013introduction} and \cite{Bloch2003}.

\subsection{The center of mass is always a fixed point}
\label{sec:AppendixB}

In this appendix, we prove that the center 
of mass is always a fixed with respect to $\ls^A H_C$ 
obtained by the time integration of \eqref{eq:centroidalFrame_ODE}.
Requiting that the CoM to be fixed with respect to
the frame $C$ is equivalent to ask that
\begin{align}\label{eq:dotAbarp}
  \ls^A \dot {\bar p}_{com} 
  & {=} 
  \ls^A \rmv_{loc}^\wedge \ls^A \bar p_{com}  
\end{align}
where $\ls^A{\bar p}_{com}$ are the homogeneous coordinates of
$p_{com}$ with respect to $A$ obtained by appending $1$
to the standard coordinates $\ls^A p_{com}$, i.e.,
$\ls^A{\bar p}_{com} := (\ls^A p_{com}; 1)$ where ; 
denotes row concatenation. The proof of this fact
derives from a straightforward manipulation
of the expression of the time derivative 
of the identity $\ls^A {\bar p}_{com} = \ls^A H_C \ls^C {\bar p}_{com}$
assuming $\ls^C p_{com}$ to be a constant.
The right hand side of 
\eqref{eq:dotAbarp} can be expressed with respect
to frame the frame $G = (p_{com}, [A])$ obtaining
the equivalent condition
\begin{align}\label{eq:dotAbarp_v2}
  \ls^A \dot {\bar p}_{com}  
  & {=} 
  \ls^A H_G \ls^G \rmv_{loc}^\wedge \ls^G \bar p_{com}
\end{align}
where $\ls^G \bar p_{com} = (0,0,0,1)^T$. 
Equation \eqref{eq:dotAbarp_v2} is then equivalent to
$\ls^A {\dot p}_{com} = \ls^A R_G \ls^G v_{loc}$
and to 
$\ls^A {\dot p}_{com} = \ls^G v_{loc} $, since $\ls^A R_G = I$. 
This last condition is always true deriving 
directly from the fact that the momentum map 
$\mathcal{J}$ expressed in the $G$ frame is given by
\begin{align*}
  \ls_G \mathcal{J}(H,\rms,\rmv,\dot\rms) 
  & = 
  \ls_G X^B \ls_B \mathbb{L}_B \ls^B \rmv_{loc}  
  \\
  & =
  \ls_G \mathbb{L}_G  \ls^G \rmv_{loc}
\end{align*}
where $\ls_G \mathbb{L}_G$ is block diagonal with first block on the diagonal equal to $m I_{3\times 3}$ with $m$ the total mass and 
that the linear momentum component of $\ls_G \mathcal{J}$ is 
necessarily $m \ls^A {\dot p}_{com}$.

\bibliographystyle{IEEEtran}
\bibliography{centroidal} 

\begin{thebibliography}{10}
\providecommand{\url}[1]{#1}
\csname url@samestyle\endcsname
\providecommand{\newblock}{\relax}
\providecommand{\bibinfo}[2]{#2}
\providecommand{\BIBentrySTDinterwordspacing}{\spaceskip=0pt\relax}
\providecommand{\BIBentryALTinterwordstretchfactor}{4}
\providecommand{\BIBentryALTinterwordspacing}{\spaceskip=\fontdimen2\font plus
\BIBentryALTinterwordstretchfactor\fontdimen3\font minus
  \fontdimen4\font\relax}
\providecommand{\BIBforeignlanguage}[2]{{%
\expandafter\ifx\csname l@#1\endcsname\relax
\typeout{** WARNING: IEEEtran.bst: No hyphenation pattern has been}%
\typeout{** loaded for the language `#1'. Using the pattern for}%
\typeout{** the default language instead.}%
\else
\language=\csname l@#1\endcsname
\fi
#2}}
\providecommand{\BIBdecl}{\relax}
\BIBdecl

\bibitem{wensing2016}
\BIBentryALTinterwordspacing
P.~M. Wensing and D.~E. Orin, ``Improved computation of the humanoid centroidal
  dynamics and application for whole-body control,'' \emph{International
  Journal of Humanoid Robotics}, vol.~13, no.~01, p. 1550039, 2016. [Online].
  Available:
  \url{http://www.worldscientific.com/doi/abs/10.1142/S0219843615500395}
\BIBentrySTDinterwordspacing

\bibitem{nava2016}
G.~Nava, F.~Romano, F.~Nori, and D.~Pucci, ``Stability analysis and design of
  momentum-based controllers for humanoid robots,'' in \emph{2016 IEEE/RSJ
  International Conference on Intelligent Robots and Systems}, Oct 2016.

\bibitem{koolen2016design}
T.~Koolen, S.~Bertrand, G.~Thomas, T.~De~Boer, T.~Wu, J.~Smith, J.~Englsberger,
  and J.~Pratt, ``Design of a momentum-based control framework and application
  to the humanoid robot atlas,'' \emph{International Journal of Humanoid
  Robotics}, vol.~13, 2016.

\bibitem{garofalo2015inertially}
G.~Garofalo, B.~Henze, J.~Englsberger, and C.~Ott, ``On the inertially
  decoupled structure of the floating base robot dynamics,'' \emph{8th Vienna
  International Conference on Mathematical Modelling (MATHMOD)}, pp. 322--327,
  2015.

\bibitem{dai2014whole}
H.~Dai, A.~Valenzuela, and R.~Tedrake, ``Whole-body motion planning with
  centroidal dynamics and full kinematics,'' in \emph{2014 IEEE-RAS
  International Conference on Humanoid Robots}, 2014, pp. 295--302.

\bibitem{ott2011posture}
C.~Ott, M.~A. Roa, and G.~Hirzinger, ``Posture and balance control for biped
  robots based on contact force optimization,'' in \emph{Humanoid Robots
  (Humanoids), 2011 11th IEEE-RAS International Conference on}.\hskip 1em plus
  0.5em minus 0.4em\relax IEEE, 2011, pp. 26--33.

\bibitem{lee2012momentum}
S.-H. Lee and A.~Goswami, ``A momentum-based balance controller for humanoid
  robots on non-level and non-stationary ground,'' \emph{Autonomous Robots},
  vol.~33, no.~4, pp. 399--414, 2012.

\bibitem{Kajita2003}
S.~Kajita, F.~Kanehiro, K.~Kaneko, F.~Fujiwara, K.~Harada, K.~Yokoi, and
  H.~Hirukawa, ``Resolved momentum control: Humanoid motion planning based on
  the linear and angular momentum,'' in \emph{IEEE International Conference on
  Intelligent Robots and Systems}, 2003, pp. 1644--1650.

\bibitem{Orin2013}
D.~E. Orin, A.~Goswami, and S.-H. Lee, ``{Centroidal dynamics of a humanoid
  robot},'' \emph{Auton. Robots}, vol.~35, no. 2-3, pp. 161--176, Jun. 2013.

\bibitem{marsden1992lecturemechanics}
J.~E. Marsden, \emph{Lectures on Mechanics}.\hskip 1em plus 0.5em minus
  0.4em\relax Cambridge University Press, 1992.

\bibitem{bloch1996nonholonomic}
A.~M. Bloch, P.~Krishnaprasad, J.~E. Marsden, and R.~M. Murray, ``Nonholonomic
  mechanical systems with symmetry,'' \emph{Archive for Rational Mechanics and
  Analysis}, vol. 136, no.~1, pp. 21--99, 1996.

\bibitem{featherstone2008}
R.~Featherstone, \emph{Rigid body dynamics algorithms}.\hskip 1em plus 0.5em
  minus 0.4em\relax Springer, 2008.

\bibitem{featherstone2008handbook}
R.~Featherstone and D.~E. Orin, ``Dynamics,'' in \emph{Springer Handbook of
  Robotics, 2nd Ed}, B.~Siciliano and O.~Khatib, Eds., 2016.

\bibitem{Bloch2003}
A.~Bloch, \emph{Nonholonomic Mechanics and Control, with the collaboration of
  J. Ballieul, P. Crouch, and J.Marsden}.\hskip 1em plus 0.5em minus
  0.4em\relax Springer-Verlag, 2003.

\bibitem{marsden2013introduction}
J.~E. Marsden and T.~Ratiu, \emph{Introduction to mechanics and symmetry: a
  basic exposition of classical mechanical systems}.\hskip 1em plus 0.5em minus
  0.4em\relax Springer Science \& Business Media, 2013, vol.~17.

\bibitem{murray1994mathematical}
R.~M. Murray, Z.~Li, S.~S. Sastry, and S.~S. Sastry, \emph{A mathematical
  introduction to robotic manipulation}.\hskip 1em plus 0.5em minus 0.4em\relax
  CRC press, 1994.

\bibitem{Park1995LieGroupFormulation}
F.~C. Park, J.~Bobrow, and S.~R. Ploen, ``A {L}ie group formulation of robot
  dynamics,'' \emph{International Journal of Robotic Research}, vol.~14, no.~6,
  pp. 609--â618, 1995.

\bibitem{eindhovenNotation}
\BIBentryALTinterwordspacing
S.~Traversaro and A.~Saccon, ``{Multibody Dynamics Notation},'' Technische
  Universiteit Eindhoven, Tech. Rep., 2016. [Online]. Available:
  \url{{http://repository.tue.nl/849895}}
\BIBentrySTDinterwordspacing

\bibitem{de2013geometric}
T.~De~Laet, S.~Bellens, R.~Smits, E.~Aertbeli{\"e}n, H.~Bruyninckx, and
  J.~De~Schutter, ``Geometric relations between rigid bodies (part 1):
  Semantics for standardization,'' \emph{Robotics \& Automation Magazine,
  IEEE}, vol.~20, no.~1, pp. 84--93, 2013.

\bibitem{bruyninckx1996symbolic}
H.~Bruyninckx and J.~De~Schutter, ``Symbolic differentiation of the velocity
  mapping for a serial kinematic chain,'' \emph{Mechanism and machine theory},
  vol.~31, no.~2, pp. 135--148, 1996.

\bibitem{jain2010robot}
A.~Jain, \emph{{Robot and Multibody Dynamics: Analysis and Algorithms}}.\hskip
  1em plus 0.5em minus 0.4em\relax Springer, 2010.

\bibitem{siciliano2010robotics}
B.~Siciliano, L.~Sciavicco, L.~Villani, and G.~Oriolo, \emph{Robotics:
  modelling, planning and control}.\hskip 1em plus 0.5em minus 0.4em\relax
  Springer Science \& Business Media, 2010.

\bibitem{chiaverini2008kinematically}
S.~Chiaverini, G.~Oriolo, and I.~D. Walker, ``Kinematically redundant
  manipulators,'' in \emph{Springer handbook of robotics}.\hskip 1em plus 0.5em
  minus 0.4em\relax Springer, 2008, pp. 245--268.

\bibitem{wieber2015handbook}
P.-B. Wieber, R.~Tedrake, and S.~Kuindersma, ``Modeling and control of legged
  robots,'' in \emph{Springer Handbook of Robotics, 2nd Ed}, B.~Siciliano and
  O.~Khatib, Eds., 2016.

\bibitem{marsden1993reduced}
J.~E. Marsden and J.~Scheurle, ``The reduced euler-lagrange equations,''
  \emph{Fields Institute Comm}, vol.~1, pp. 139--164, 1993.

\bibitem{Ball2012}
K.~R. Ball, D.~V. Zenkov, and A.~M. Bloch, ``Variational structures for hamel's
  equations and stabilization,'' \emph{IFAC Workshop on Lagrangian and
  Hamiltonian Methods for Non Linear Control}, pp. 178--183, 2012.

\bibitem{orin2008centroidal}
D.~E. Orin and A.~Goswami, ``Centroidal momentum matrix of a humanoid robot:
  Structure and properties,'' in \emph{2008 IEEE/RSJ International Conference
  on Intelligent Robots and Systems}.\hskip 1em plus 0.5em minus 0.4em\relax
  IEEE, 2008.

\bibitem{ding2016dynamic}
X.~Ding and H.~Chen, ``Dynamic modeling and locomotion control for quadruped
  robots based on center of inertia on {SE}(3),'' \emph{Journal of Dynamic
  Systems, Measurement, and Control}, vol. 138, no.~1, 2016.

\bibitem{Wieber2005}
P.~Wieber, ``{Holonomy and nonholonomy in the dynamics of articulated
  motion},'' in \emph{Fast Motions in Biomechanics and Robotics. Optimization
  and Feedback Control}, M.~Diehl and K.~Mombaur, Eds.\hskip 1em plus 0.5em
  minus 0.4em\relax Springer, 2005.

\bibitem{KoNu96B_FoundDiffGeoVol1}
S.~Kobayashi and K.~Nomizu, \emph{Foundations of differential geometry (Vol.
  I)}, 2nd~ed.\hskip 1em plus 0.5em minus 0.4em\relax Wiley, 1996.

\bibitem{Saccon2013}
A.~Saccon, J.~Hauser, and A.~P. Aguiar, ``Optimal control on lie groups: The
  projection operator approach,'' \emph{IEEE Transactions on Automatic
  Control}, vol.~58, no.~9, pp. 2230--2245, 2013.

\bibitem{Ostrowski1998}
J.~Ostrowski and J.~Burdick, ``The geometric mechanics of undulatory robotic
  locomotion,'' \emph{The International Journal of Robotics Research}, vol.~17,
  pp. 683--701, 1998.

\bibitem{Grizzle2001}
J.~W. Grizzle, G.~Abba, and F.~Plestan, ``Asymptotically stable walking for
  biped robots: analysis via systems with impulse effects,'' \emph{IEEE
  Transactions on Automatic Control}, vol.~46, no.~1, pp. 51--64, 2001.

\bibitem{Grizzle2014}
J.~W. Grizzle, C.~Chevallereau, R.~W. Sinnet, and A.~D. Ames, ``Models,
  feedback control, and open problems of {3D} bipedal robotic walking,''
  \emph{Automatica}, vol.~50, no.~8, pp. 1955 -- 1988, 2014.

\end{thebibliography}

\end{document}